\documentclass{article}

\usepackage{microtype}
\usepackage{graphicx}
\usepackage{subfigure}
\usepackage{booktabs} 

\usepackage{hyperref}
\usepackage{nccmath}

\usepackage[accepted]{icml2023}

\usepackage{amsmath}
\usepackage{amssymb}
\usepackage{mathtools}
\usepackage{amsthm}

\usepackage[capitalize,noabbrev]{cleveref}

\theoremstyle{plain}
\newtheorem{theorem}{Theorem}[section]
\newtheorem{proposition}[theorem]{Proposition}
\newtheorem{lemma}[theorem]{Lemma}

\theoremstyle{definition}

\newtheorem{assumption}[theorem]{Assumption}
\theoremstyle{remark}

\usepackage{color}
\usepackage{graphics}
\usepackage{epsfig}
\usepackage{subfigure}
\usepackage{natbib}
\usepackage{booktabs}
\usepackage{url}
\usepackage{psfrag}
\usepackage{relsize}
\usepackage{tikz,pgfplots}
\usepackage{caption}

\usepackage{multirow}
\usepackage{threeparttable}
\usepackage{wrapfig}

\DeclareSymbolFont{wideparensymbol}{OMX}{yhex}{m}{n}
\DeclareMathAccent{\wideparen}{\mathord}{wideparensymbol}{"F3} 

\newcommand{\argmin}{\operatornamewithlimits{arg \, min}}
\newcommand{\argmax}{\operatornamewithlimits{arg \, max}}

\usepackage{tikz,pgfplots}
\pgfplotsset{compat=newest}
\pgfplotsset{plot coordinates/math parser=false,trim axis left}
\newlength\figureheight
\newlength\figurewidth

\usepackage{color}
\usepackage[all]{xy}
\allowdisplaybreaks

\begin{document}

\twocolumn[
\icmltitle{Rethinking Weak Supervision in Helping Contrastive Learning}

\icmlsetsymbol{equal}{*}
\begin{icmlauthorlist}
\icmlauthor{Jingyi Cui}{pk1,equal}
\icmlauthor{Weiran Huang}{sj,hw,equal}
\icmlauthor{Yifei Wang}{pk2,equal}
\icmlauthor{Yisen Wang}{pk1,pk3}
\end{icmlauthorlist}

\icmlaffiliation{pk1}{National Key Lab of General Artificial Intelligence, School of Intelligence Science and Technology, Peking University}
\icmlaffiliation{pk2}{School of Mathematical Sciences, Peking University}
\icmlaffiliation{sj}{Qing Yuan Research Institute, Shanghai Jiao Tong University}
\icmlaffiliation{hw}{Huawei Noah’s Ark Lab}
\icmlaffiliation{pk3}{Institute for Artificial Intelligence, Peking University}

\icmlcorrespondingauthor{Yisen Wang}{yisen.wang@pku.edu.cn}

\icmlkeywords{Machine Learning, ICML}

\vskip 0.3in
]

\printAffiliationsAndNotice{\icmlEqualContribution}

\begin{abstract}
Contrastive learning has shown outstanding performances in both supervised and unsupervised learning, and has recently been introduced to solve weakly supervised learning problems such as semi-supervised learning and noisy label learning. Despite the empirical evidence showing that semi-supervised labels improve the representations of contrastive learning, it remains unknown if noisy supervised information can be directly used in training instead of after manual denoising. Therefore, to explore the mechanical differences between semi-supervised and noisy-labeled information in helping contrastive learning, we establish a unified theoretical framework of contrastive learning under weak supervision. Specifically, we investigate the most intuitive paradigm of jointly training supervised and unsupervised contrastive losses. By translating the weakly supervised information into a similarity graph under the framework of spectral clustering based on the posterior probability of weak labels, we establish the downstream classification error bound. We prove that semi-supervised labels improve the downstream error bound whereas noisy labels have limited effects under such a paradigm. Our theoretical findings here provide new insights for the community to rethink the role of weak supervision in helping contrastive learning. 
\end{abstract}

\section{Introduction}

Contrastive learning has shown state-of-the-art empirical performances in unsupervised representation learning \citep{chen2020simple, he2020momentum, chen2021exploring,wang2021residual}.
It learns good representations of high-dimensional observations from a large amount of unlabeled data, by pulling together an anchor and its augmented views in the embedding space. 
On the other hand, supervised contrastive learning \citep{khosla2020supervised} uses same-class examples and their corresponding augmentations as positive labels, and achieves significantly better performance than both the unsupervised contrastive learning and the state-of-the-art classification losses, e.g. cross entropy loss. While accurate supervised signals are not always available, there is a lot of weak supervision accompanying the data. This makes us wonder: \emph{can weak supervision help contrastive learning?}

There are two major types of weak supervision. The first is semi-supervised information, where the supervised labels are only available on a small fraction of samples. The second is noisy-labeled information, where the labels are available but unreliable, i.e. the labels can possibly be wrong. 
Empirical evidence has shown that semi-supervised information can directly be used in positive sample selection to help improve the representations of contrastive learning by jointly training the supervised and unsupervised contrastive losses \citep{assran2020supervision, acharya2022positive}. 
By contrast, for noisy-labeled information, most methodological studies use contrastive learning as a tool to select confident samples based on the learned representations \citep{yao2021jo, ortego2021multi, li2022selective, zhang2022learning}, whereas none of the existing literature demonstrates if noisy label information can help improve the representations of contrastive learning. Therefore, we are wondering \emph{if the conclusion on noisy-labeled information is the same as the semi-supervised information. If not, what are the differences between semi-supervised and noisy-labeled information in helping contrastive learning?}

In this paper, we show an awkward fact that noisy labels have a limited effect on representations of contrastive learning under the joint training paradigm, and the winner of supervised and unsupervised contrastive learning itself serves as an embarrassingly strong baseline. As a preview of results, in Figure \ref{fig::max}, we plot the result of unsupervised contrastive learning trained without labels (SimCLR \citep{chen2020simple}), supervised contrastive learning trained with noisy labels (SupCon \citep{khosla2020supervised}), the winner of SupCon and SimCLR (Max), and joint training of SimCLR and SupCon under label noise (JointTraining), respectively. We conduct experiments under symmetric label noise with noise rates ranging from $0\%$ to $60\%$, stepped by $10\%$. We evaluate the trained representations by linear probing on the clean testing data.
The weight parameter $\theta$ of JointTraining is tuned in $\{0, 0.05, 0.1, 0.2, 0.4, 0.6, 0.8, 1.0\}$, and the best linear probing accuracy under the optimal $\theta$ is reported.
We show that supervised training outperforms unsupervised training when the noise rate is low, and the situation reverses when the noise rate is high.
Nonetheless, the joint training of SimCLR and SupCon (with finely tuned weights) has only limited advantage over the winner of SupCon and SimCLR. Moreover, its performance coincides with that of SupCon when noise rate is $0$, and converges to that of SimCLR when noise rate increases.
This indicates that noisy labels provide limited help to contrastive representation learning under the paradigm of joint training, and that the winner of SupCon and SimCLR itself serves as a strong baseline.

\begin{figure}[!t]
	\begin{center}
		\centerline{\includegraphics[width=\columnwidth]{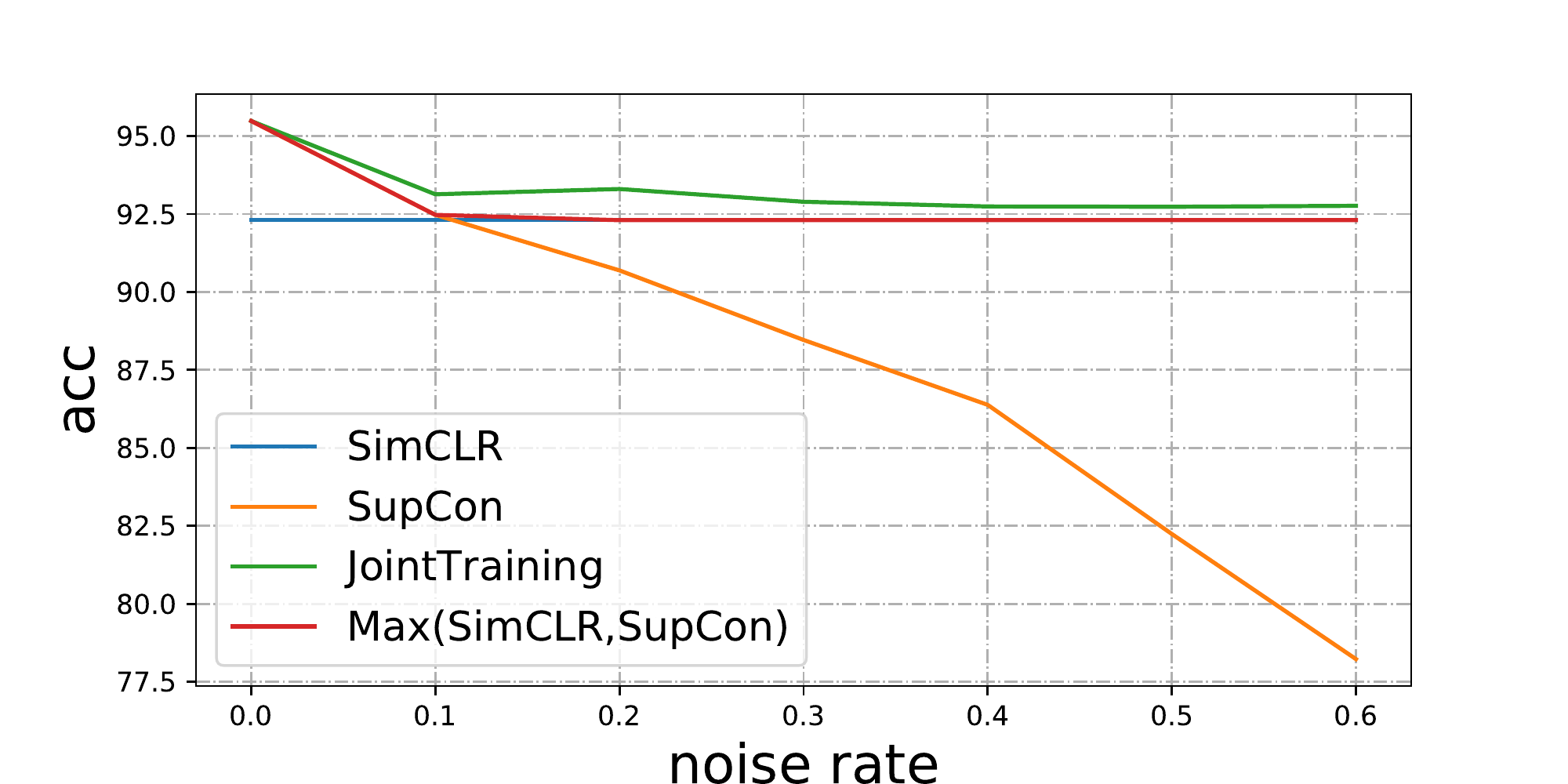}}
        \vskip -0.05in
		\caption{The winner of SupCon and SimCLR serves as a strong baseline of joint training under label noise on the  CIFAR-10 dataset.}
		\label{fig::max}
	\end{center}
	\vskip -0.4in
\end{figure}

To explain the above phenomena in depth and exploit the mechanical differences between semi-supervised and noisy-labeled information in helping contrastive learning, we establish a unified theoretical framework of weakly supervised contrastive learning, applying to both semi-supervised and noisy-labeled settings.
We investigate the joint training of supervised and unsupervised contrastive losses, and take spectral contrastive learning \citep{haochen2021provable} as a performance proxy for (standard) contrastive learning to conduct the theoretical analysis.
Based on the posterior probability of labeled samples, we translate the weakly supervised information (under symmetric label noise assumption) into a similarity graph under the framework of spectral clustering.
This enables us to analyze the effect of the label information on the augmentation graph, and consequently on the derived error bound.
Accordingly, we prove that under the semi-supervised setting, the label information helps improve the downstream error bound, whereas under the noisy-labeled setting, the joint training is no better than the winner of supervised and unsupervised contrastive learning in terms of an error bound.

The contributions of this paper are summarized as follows.
\begin{itemize}
	\item We for the first time establish a theoretical framework for contrastive learning under weak supervision, including noisy label learning and semi-supervised learning.
	\item By formulating the label information into a similarity graph based on the posterior probability of labels, we derive the downstream error bound of jointly trained contrastive learning losses. 
	We prove that semi-supervised labels improve the downstream error bound compared with unsupervised learning, whereas under the noisy-labeled setting, joint training fails to improve the error bound compared with the winner of supervised and unsupervised contrastive learning.
	\item We empirically verify that noisy labels have only limited help to contrastive representation learning under the paradigm of joint training. Thus, complex designs such as label denoising are still required for leveraging noisy labeled information in improving supervised contrastive learning.
\end{itemize}

\section{Related Works}

\paragraph{Theoretical Frameworks of Contrastive Learning.}
The theoretical frameworks of unsupervised contrastive learning can be divided into two major categories. 
The first category is devoted to building the relationship between unsupervised contrastive learning and supervised downstream classification.
\citet{arora2019theoretical} first introduce the concept of latent classes, hypothesizes that semantically similar points are sampled from the same latent class, and proves that the unsupervised contrastive loss serves as an upper bound of downstream supervised learning loss.
\citet{nozawa2021understanding, ash2022investigating, bao2022surrogate} further investigate the effect of negative samples, and establish surrogate bounds for the downstream classification loss that better match the empirical observations on the negative sample size.
However, studies in this category have to assume the existence of supervised latent classes, and that the positive pairs are conditionally independently drawn from the same latent class. This assumption fails to distinguish between supervised and unsupervised contrastive learning, and therefore cannot be used to analyze the weakly supervised setting.

Another major approach is to analyze contrastive learning by modeling the feature similarity.
\citet{haochen2021provable} first introduce the concept of the \textit{augmentation graph} to represent the feature similarity of the augmented samples, and analyzes contrastive learning from the perspective of spectral clustering. 
\citet{shen2022connect} use a stochastic block model to analyze spectral contrastive learning for the problem of unsupervised domain adaption. 
Similarly, \citet{wang2021chaos} propose the concept of \textit{augmentation overlap} to formulate how the positive samples are aligned. 
Moreover, contrastive learning is also understood through other existing theoretical frameworks of unsupervised learning, such as nonlinear independent component analysis \citep{zimmermann2021contrastive}, neighborhood component analysis \citep{ko2022revisiting}, variational autoencoder \citep{aitchison2021infonce}, stochastic neighbor embedding \citep{hu2022your}, geometric analysis of embedding space \citep{huang2023towards}, and message passing \citep{wang2023message}.

In this paper, we follow the second category of contrastive learning approaches, and formulate the weakly supervised information into a similarity graph based on both label and feature information. 

\paragraph{Contrastive Learning for Noisy Label Learning.}
\citet{ghosh2021contrastive} first find that pretraining with contrastive learning improves robustness to label noise through empirical evidence.
Many methodological studies are carried out for noisy label learning with the help of contrastive learning. 
\citet{yao2021jo, ortego2021multi, li2022selective} use representations learned from unsupervised contrastive learning to filter out confident samples from all noisy ones, and in turn use the confident samples to conduct supervised contrastive learning to generate better representations. 
\citet{navaneet2022constrained} introduce additional semantically similar supervision to contrastive representation learning by incorporating nearest neighbors under certain constraints as additional positive samples, which also adapts to noisy label learning.
By contrast, \citet{yan2022noise} follow the idea of negative learning \citep{kim2019nlnl, kim2021joint}, and leverages the negative correlations from the noisy data to avoid same-class negatives in contrastive learning.
\citet{chuang2022robust} propose a robust contrastive loss function inspired by the symmetric losses that are proved to be noise tolerant.
Very recently, \citet{zhang2022learning} use contrastive learning to handle noisy labels of long-tailed data.
For theoretical studies, \citet{cheng2021demystifying} analyze the robustness of cross-entropy with SSL features, and \citet{xue2022investigating} prove the robustness of downstream classifier in contrastive learning.

 \paragraph{Contrastive Learning for Semi-supervised Learning.}
 \citet{lee2022contrastive, yang2022class} use contrastive regularization to enhance the reliability of pseudo-labeling in semi-supervised learning.
 \citet{kim2021selfmatch} introduce a semi-supervised learning method that combines self-supervised contrastive pre-training and semi-supervised fine-tuning based on augmentation consistency regularization.
 \citet{zhang2022semi} use contrastive loss to model pairwise similarities among samples, generates pseudo labels from the cross entropy loss, and in turn calibrates the prediction distribution of the two branches.

To conclude, the existing studies of contrastive learning under weak supervision mainly focus on using contrastive learning as a tool to improve the weakly supervised learning performance, whereas to the best of our knowledge, none of the previous works reveals how weak supervision helps contrastive learning. 
To fill in the blank, in this paper, we establish a theoretical framework for contrastive learning under weak supervision, and show the effects of semi-supervised and noisy-labeled information on the error bounds of contrastive learning.

\section{Preliminaries}

\textbf{Notations.} Suppose that random variables $\bar{X} \in \bar{\mathcal{X}}:= \mathbb{R}^d$, and $Y \in [r]:=\{1,\ldots,r\}$. Let the input natural data $\{(\bar{x}_i,y_i)\}_{i\in[N]}$ be i.i.d.~sampled from the joint distribution $\mathrm{P}(\bar{X},Y)$.
Given a natural data $\bar{x} \in \bar{\mathcal{X}}$, we use $\mathcal{A}(\cdot|\bar{x})$ to denote the distribution of its augmentations and use $\mathcal{X}$ to denote the set of all augmented data, which is assumed to be finite but exponentially large. Denote $n=|\mathcal{X}|$.

\subsection{Spectral Contrastive Learning}

In \citet{haochen2021provable}, an augmentation graph $\mathcal{G}$ is used to describe the distribution of augmented samples, where the edge weight $w_{xx'} := \mathbb{E}_{\bar{x} \sim \bar{\mathcal{P}}} [\mathcal{A}(x|\bar{x}) \mathcal{A}(x'|\bar{x})]$ denotes the marginal probability of generating augmented views $x$ and $x'$ from the same natural data. Due to the total probability mass, $\sum_{x,x'\in\mathcal{X}}w_{xx'}=1$.
The adjacent matrix of the augmentation graph is denoted as $\boldsymbol{A} := (w_{xx'})_{x,x' \in \mathcal{X}} \in \mathbb{R}^{n\times n}$, and the normalized adjacent matrix is denoted as $\bar{\boldsymbol{A}} := D^{-1/2}\boldsymbol{A}D^{-1/2}$, where $D := \mathrm{diag}(w_x)_{x \in \mathcal{X}}$, and $w_x := \sum_{x'\in\mathcal{X}}w_{xx'}$.

In this paper, we consider the spectral contrastive loss $\mathcal{L}(f)$ proposed by \citet{haochen2021provable}, that is, for an embedding function $f : \mathcal{X} \to \mathbb{R}^k$, 
\begin{align}\label{eq::spectralloss}
	-2\cdot \mathbb{E}_{x,x^+}[f(x)^\top f(x^+)] + \mathbb{E}_{x,x'}\Big[\big(f(x)^\top f(x')\big)^2\Big].
\end{align}
Spectral contrastive loss is proved to be equivalent to the matrix factorization loss, i.e. for $F \in \mathbb{R}^{n\times k}:=(u_x)_{x\in\mathcal{X}}$, $u_x := w_x^{1/2}f(x)$, 
\begin{align}\label{eq::mfloss}
	\mathcal{L}_{\mathrm{mf}}(F) := \|\bar{\boldsymbol{A}}-FF^\top\|_F^2 = \mathcal{L}(f) + const.
\end{align}

\subsection{Noisy Label Learning}

Recall that we denote the true label of a given instance $x \in \mathcal{X}$ is $y$. One common assumption of the generation procedure of label noise is as follows.
Given the true labels, the noisy label is randomly flipped to another label $\tilde{y}$ with some probability.
In this paper, we take the widely adopted symmetric label noise assumption as an example.

For notational simplicity, we write the symmetric label noise assumption in matrix form. 
Denote $\boldsymbol{Y} := (\eta_j(x_i))_{i \in [n], j \in [r]}$, $\eta_j(x) = \mathrm{P}(Y=j|x)$, as the posterior probability matrix of the clean label distribution, and denote $\tilde{\boldsymbol{Y}} := (\tilde{\eta}_j(x_i))_{i \in [n], j \in [r]}$, $\tilde{\eta}_j(x) = \mathrm{P}(\tilde{Y}=j|x)$, as the noisy label distribution. 
In Assumption \ref{ass::symnoise}, we assume that the flipping probability is conditional independent of the input data, and that the flipping probability to all other classes is uniformly at random.

\begin{assumption}\label{ass::symnoise}
	For symmetric label noise with noise rate $\gamma \in (0,1)$, we denote the transition matrix $\boldsymbol{T}=(t_{i,j})_{i \in [r], j \in [r]}$, where
	\begin{align}
		t_{i,i}=1-\gamma, \text{ and } t_{i,j}=\frac{\gamma}{r-1}, \text{ for } j\neq i.
	\end{align}	
	Then the noisy label posterior distribution is assumed to be 
	\begin{align}\label{eq::Y_noise}
		\tilde{\boldsymbol{Y}} = \boldsymbol{Y}\boldsymbol{T}.
	\end{align}
\end{assumption}
Under Assumption \ref{ass::symnoise}, $\boldsymbol{T}$ is symmetric. Specifically, when $\gamma=0$, $\boldsymbol{T}$ degenerates to the identity matrix $\boldsymbol{I}_{r \times r}$.
Moreover, to guarantee PAC-learnability, we usually assume the true label is the dominating class, i.e. $\gamma < \frac{r-1}{r}$.

\subsection{Semi-Supervised Learning}

For $j \in [r]$, let $n_j$ be the number of labeled samples of Class $j$. Let $n_L = \sum_{j \in [r]} n_{L,j}$ be the number of all labeled samples, and $n_U$ be the number of unlabeled samples. Obviously, we have $n_L + n_U = n$. Usually, the number of labeled samples is much smaller than that of the unlabeled ones because human annotation is costly and labor-intensive. That is, we can naturally assume $n_L \ll n_U$.

In the following parts of the paper, we analyze the settings of noisy label learning and semi-supervised learning in a unified framework. Without loss of generality, we assume $(x_1, \ldots, x_{n_L})$ is labeled with noise rate $\gamma \in [0,\frac{r-1}{r})$, and denote the corresponding clean and noisy posterior probability matrices as $\boldsymbol{Y}_{L}$ and $\tilde{\boldsymbol{Y}}_{L}$, respectively. Then we have $\tilde{\boldsymbol{Y}}_{L} = \boldsymbol{Y}_{L}\boldsymbol{T}$. 
Specifically, when $\gamma =0$, our analyzing framework degenerates to the standard setting of semi-supervised learning, and when $n_L=n$, our analyzing framework reduces to the standard noisy label learning.

\section{Mathematical Formulations}

We mention that our formulation of ``similarity graph'' is not a distributional assumption on the underlying similarity among data, but to formulate a possible probability of drawing positive samples in contrastive learning that takes both label and feature information into consideration. Specifically, in Sections \ref{sec::graphnoisy} and \ref{sec::graphsemi}, we only discuss the similarity graph induced by the weakly supervised labels and neglected feature similarity. Note that in Section \ref{sec::graphsemi} we investigate the setting of semi-supervised noisy labels, so as to include semi-supervised learning and noisy label learning in a unified framework. Then in Section \ref{sec::graphmix}, we take both label and feature similarity into consideration through the convex combination to describe the joint training loss.

\subsection{Similarity Graph Describing Noisy Labels}\label{sec::graphnoisy}

To leverage the labeled information in the form of a similarity graph, we first consider a simple example where noise rate $\gamma=0$ and the label distribution is deterministic, i.e. for a sample $x$ with true label $y$, the posterior probability $\eta_y(x)=1$ and $\eta_j(x)=0$ for $j \neq y$. In this case, we can naturally assume that in the label similarity graph, the intra-class vertices are fully connected and the inter-class vertices are disconnected. 
That is, $w_{xx'}=1$ if $x$ and $x'$ has the same label and otherwise $w_{xx'}=0$.

Then we consider the more general stochastic label scenario. Recall that for unsupervised spectral contrastive learning, the edge weight $w_{xx'}$ in an augmentation graph $\mathcal{G}$ describes the marginal probability of generating $x$ and $x'$ from the same natural data.
That is, $w_{xx'}$ describes the joint probability of a pair of positive samples. 
Similarly, since the positive samples for supervised contrastive learning \citep{khosla2020supervised} are selected as all same-class samples, we can naturally define the edge weight $w_{xx'}$ as the probability of two views $x$ and $x'$ generating from the same class, i.e. $w_{xx'} = \sum_{j \in [r]} \eta_j(x)\eta_j(x') $, and therefore $\boldsymbol{A}_L := \boldsymbol{Y}_L\boldsymbol{Y}_L^\top$.
Moreover, we denote $\bar{\boldsymbol{A}}$ as the normalized adjacent matrix. For the simplicity of notations, we consider the case where the data is class-balanced, i.e. $n_1 = \ldots = n_r = n_L/r$. Then we have $\bar{\boldsymbol{A}} = \frac{r}{n_L}\boldsymbol{A}$.

Next, we add label noise to our mathematical formulations. To be specific, when performing supervised contrastive learning based on noisy labeled data, we naturally select positive samples as the samples with the same \textit{noisy} labeled data. 
According to Assumption \ref{ass::symnoise}, we have $\tilde{\boldsymbol{Y}}_L = \boldsymbol{Y}_L\boldsymbol{T}$, where $\boldsymbol{T}$ is symmetric.
Then the adjacent matrix of the similarity graph describing noisy labels is formulated as
\begin{align}
	\boldsymbol{A}^{\star}_L :&= \tilde{\boldsymbol{Y}}_L \tilde{\boldsymbol{Y}}_L^\top = \boldsymbol{Y}_L\boldsymbol{T} (\boldsymbol{Y}_L\boldsymbol{T})^\top 
	= \boldsymbol{Y}_L\boldsymbol{T} \boldsymbol{T}^\top \boldsymbol{Y}_L^\top
	\nonumber \\
	&= \boldsymbol{Y}_L\boldsymbol{T}^2 \boldsymbol{Y}_L^\top.
\end{align}
Similarly, when data is class balanced, we have the normalized adjacent matrix $\bar{\boldsymbol{A}}^{\star}_L = \frac{n_L}{r} \boldsymbol{A}^{\star}_L$.

\subsection{Similarity Graph Describing Semi-Supervised Noisy Labels}\label{sec::graphsemi}

Under the setting of semi-supervised learning, we have no prior knowledge about the label information of the unlabeled samples. Therefore, from the perspective of unsupervised contrastive learning, the unlabeled samples can be viewed as having unique class labels. Therefore, to construct the similarity graph, we attach sample-specific labels to the unlabeled samples. Thus, the posterior probability matrix of unlabeled samples $\boldsymbol{Y}_U$ is an identity matrix $\boldsymbol{I}_{n_U \times n_U}$.
Note that here we only discuss the similarity graph of supervised information, so the feature similarity between samples is not included in the similarity graph.

Combining both labeled and unlabeled samples, the posterior probability matrix of all semi-supervised samples can be denoted as
\begin{align}
	\tilde{\boldsymbol{Y}} = 
	\begin{bmatrix}
		\tilde{\boldsymbol{Y}}_L & \boldsymbol{0} \\
		\boldsymbol{0} & \tilde{\boldsymbol{Y}}_U
	\end{bmatrix}
	=
	\begin{bmatrix}
		\boldsymbol{Y}_L \boldsymbol{T} & \boldsymbol{0} \\
		\boldsymbol{0} & \boldsymbol{I}_{n_U \times n_U}
	\end{bmatrix}.
\end{align}

Therefore, the similarity graph of samples with $n_L$ noisy labels can be denoted as
\begin{align}\label{eq::tildeA}
	\boldsymbol{A}^{\star} 
	= \tilde{\boldsymbol{Y}} \tilde{\boldsymbol{Y}}^\top
	= 
	\begin{bmatrix}
		\boldsymbol{Y}_L \boldsymbol{T}^2 \boldsymbol{Y}_L^\top & \boldsymbol{0} \\
		\boldsymbol{0} & \boldsymbol{I}_{n_U \times n_U}.
	\end{bmatrix}
\end{align}

In Lemma \ref{lem::relation} we present the influence of symmetric label noise with noise rate $\gamma$ on the similarity graph $\boldsymbol{A}^{\star}$. 
\begin{lemma}\label{lem::relation}
	Under Assumption \ref{ass::symnoise}, if the data is class balanced, i.e. $n_1 = \ldots = n_r = \frac{n_L}{r}$, then there holds
	\begin{align}\label{eq::relation}
		\bar{\boldsymbol{A}}^{\star} = 
		\begin{bmatrix}
			\alpha(\gamma) \bar{\boldsymbol{A}}_L + \beta(\gamma) \frac{r}{n_L} \vec{1}_{n_L}\vec{1}_{n_L}^\top & \boldsymbol{0} \\
			\boldsymbol{0} & \boldsymbol{I}_{n_U \times n_U}
		\end{bmatrix},
	\end{align}
	where $\alpha(\gamma) := \big(1-\frac{r}{r-1}\gamma\big)^2$ and $\beta(\gamma) := \frac{\gamma}{r-1}\big(2-\frac{r}{r-1}\gamma\big)$.
\end{lemma}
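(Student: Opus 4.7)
The plan is to reduce the block-diagonal identity \eqref{eq::tildeA} to the claimed block form by separately normalizing the labeled and unlabeled blocks. The unlabeled block is already the identity, whose row-sums are all $1$, so after normalization it remains $\boldsymbol{I}_{n_U\times n_U}$ and requires no further work. All the content of the lemma lives in the top-left block, where I need to expand $\boldsymbol{Y}_L\boldsymbol{T}^2\boldsymbol{Y}_L^\top$ and divide by an appropriate diagonal degree matrix.

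First I would write the transition matrix in the rank-one-plus-scalar form
\begin{align*}
\boldsymbol{T} \;=\; \Bigl(1-\tfrac{r\gamma}{r-1}\Bigr)\boldsymbol{I}_{r} \;+\; \tfrac{\gamma}{r-1}\,\vec{1}_{r}\vec{1}_{r}^{\top},
\end{align*}
which is immediate from the definition of symmetric noise. Squaring this decomposition and using $\vec{1}_{r}^{\top}\vec{1}_{r}=r$ to collapse the quadratic term, I expect to obtain exactly
\begin{align*}
\boldsymbol{T}^{2} \;=\; \alpha(\gamma)\,\boldsymbol{I}_{r} \;+\; \beta(\gamma)\,\vec{1}_{r}\vec{1}_{r}^{\top},
\end{align*}
with the constants as defined in the statement (a short algebraic check). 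Substituting into $\boldsymbol{Y}_L\boldsymbol{T}^2\boldsymbol{Y}_L^\top$ and invoking the tautology $\boldsymbol{Y}_L\vec{1}_{r}=\vec{1}_{n_L}$ (rows of a posterior matrix sum to $1$) will give the unnormalized identity
\begin{align*}
\boldsymbol{Y}_L \boldsymbol{T}^2 \boldsymbol{Y}_L^{\top} \;=\; \alpha(\gamma)\,\boldsymbol{A}_L \;+\; \beta(\gamma)\,\vec{1}_{n_L}\vec{1}_{n_L}^{\top}.
\end{align*}

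Next I would compute the degree matrix on the labeled block. Using $\boldsymbol{Y}_L^{\top}\vec{1}_{n_L}=(n_L/r)\vec{1}_{r}$, which is where class balance $n_1=\cdots=n_r=n_L/r$ enters, together with the fact that $\boldsymbol{T}$ is doubly stochastic so $\boldsymbol{T}^2\vec{1}_{r}=\vec{1}_{r}$, each row-sum of $\boldsymbol{Y}_L\boldsymbol{T}^2\boldsymbol{Y}_L^\top$ equals $n_L/r$. Hence the degree matrix on this block is $(n_L/r)\boldsymbol{I}_{n_L}$, and multiplying the unnormalized expression by $r/n_L$ produces the desired $\alpha(\gamma)\bar{\boldsymbol{A}}_L+\beta(\gamma)(r/n_L)\vec{1}_{n_L}\vec{1}_{n_L}^\top$ once I recall that $\bar{\boldsymbol{A}}_L=(r/n_L)\boldsymbol{A}_L$ under class balance.

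There is no real obstacle here: the argument is essentially bookkeeping around the spectral decomposition of $\boldsymbol{T}$. The only place where care is required is the constant $\beta(\gamma)$, which comes from combining the cross term $2\cdot(1-r\gamma/(r-1))\cdot\gamma/(r-1)$ with the pure rank-one contribution $r(\gamma/(r-1))^2$ and simplifying; this is a one-line computation but must be done correctly for the final matching with $\tfrac{\gamma}{r-1}(2-\tfrac{r\gamma}{r-1})$.
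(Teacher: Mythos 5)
Your proposal is correct and follows essentially the same route as the paper: establish $\boldsymbol{T}^2=\alpha(\gamma)\boldsymbol{I}_r+\beta(\gamma)\vec{1}_r\vec{1}_r^\top$, push it through $\boldsymbol{Y}_L\boldsymbol{T}^2\boldsymbol{Y}_L^\top$ using $\boldsymbol{Y}_L\vec{1}_r=\vec{1}_{n_L}$, and show the labeled degrees all equal $n_L/r$ under class balance so that normalization is just multiplication by $r/n_L$. The only (cosmetic) differences are that you obtain $\boldsymbol{T}^2$ by squaring the rank-one decomposition of $\boldsymbol{T}$ rather than computing its entries directly, and you get the degrees from $\boldsymbol{T}^2\vec{1}_r=\vec{1}_r$ rather than from the identity $\alpha+r\beta=1$; both shortcuts are valid.
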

Note that without label noise, i.e. $\gamma=0$, we have $\alpha(\gamma)=1$ and $\beta(\gamma)=0$. For the sake of simplicity, in the following, we write $\alpha$ and $\beta$ instead of $\alpha(\gamma)$ and $\beta(\gamma)$ when no ambiguity is aroused.

In Lemma \ref{lem::relation}, we show that the effect of symmetric label noise is to add a uniform weight to the edges between all labeled samples. This uniform weight increases the confusion between intra- and inter-class similarities. 
For example, under the deterministic label scenario, we have $\boldsymbol{A}_L = \boldsymbol{I}_{n_L\times n_L}$, and under label noise, the original intra-class similarity is uniformly shrunk from $1$ to $\alpha$ and the inter-class similarity increases from $0$ to $\beta$.
Moreover, as the noise rate $\gamma$ increases, $\alpha$ decreases and $\beta$ increases, which results in severer confusion between the intra- and inter-class similarities. 
Intuitively, as the similarity graph describes the sampling probability of positive pairs, $\alpha$ and $\beta$ measure \textit{how noisy that similarity is}. (The sampling probability of negative samples remains unaffected because they are assumed to be uniformly sampled regardless of labels.) 
Then under label noise, the probability of same-class samples being selected as positive pairs reduces from 1 to $\alpha + \frac{r}{n_L}\beta$, and the probability of different-class samples being selected as positive pairs raises from $0$ to $\frac{r}{n_L}\beta$.

Note that our mathematical formulations can also be extended to generalized label noise assumptions of the noise transition matrix $\boldsymbol{T}$ (other than symmetric label noise). Specifically, under generalized assumptions, the term $\frac{r}{n_L} \vec{1}_{n_L}\vec{1}_{n_L}^\top$ in \eqref{eq::relation} will become a real symmetric matrix which depends on the specific form of the label noise assumption.

\subsection{Similarity Graph Describing Joint Training}\label{sec::graphmix}

Recall that we want to investigate the effect of noisy labels in the joint training of supervised and unsupervised contrastive losses.
Specifically, for $\theta \in (0,1)$, we have the loss for joint training as
\begin{align}\label{eq::jt}
	\mathcal{L}_{\mathrm{JointTraining}} := (1-\theta)\mathcal{L}_{\mathrm{unsup}}+ \theta \mathcal{L}_{\mathrm{sup}},
\end{align}
where in $\mathcal{L}_{\mathrm{unsup}}$, the positive samples are selected as augmentations of the anchor sample, and in $\mathcal{L}_{\mathrm{sup}}$, the positive samples are selected as (possibly noisy) same-class samples.
The specific algorithms of joint training for noisy label learning and semi-supervised learning are shown in Appendix \ref{app::alg}.

According to \eqref{eq::mfloss}, the spectral contrastive loss is equivalent to the matrix factorization loss. 
Therefore, we can write \eqref{eq::jt} as a convex combination of matrix factorization losses with similarity graphs under label and feature information, i.e.
\begin{align}\label{eq::jointmf}
	(1-\theta)\|\bar{\boldsymbol{A}}_0 - FF^\top\|_F^2 + \theta \|\bar{\boldsymbol{A}}^{\star} - FF^\top\|_F^2,
\end{align}
where we denote $\boldsymbol{A}_0$ as the augmentation graph of arbitrary unlabeled samples describing feature information, and $\boldsymbol{A}^{\star}$ is denoted in \eqref{eq::tildeA} describing the similarity graph induced by the semi-supervised noisy labeled (augmented) samples.

Note that \eqref{eq::jointmf} can be rewritten as 
\begin{align}\label{eq::jointA}
	\|((1-\theta) \bar{\boldsymbol{A}}_0 + \theta \bar{\tilde{\boldsymbol{A}}}) - FF^\top\|_F^2 + c_0(\theta),
\end{align}
where $c_0(\theta):=(1-\theta)\|\bar{\boldsymbol{A}}_0\|_F^2 + \theta \|\bar{\tilde{\boldsymbol{A}}}\|_F^2 - \|(1-\theta) \bar{\boldsymbol{A}}_0 + \theta \bar{\tilde{\boldsymbol{A}}}\|_F^2$ is independent of $F$.
As $c_0(\theta)$ does not affect the training procedure, optimizing \eqref{eq::jointmf} and \eqref{eq::jointA} results in the same optimal $F$.
Therefore, in the following, to analyze the joint training loss \eqref{eq::jt}, we investigate the properties of the mixed similarity graph 
\begin{align}\label{eq::graphmix}
	\boldsymbol{A}_{\theta, \gamma, n_L}
	:= (1-\theta)\bar{\boldsymbol{A}}_0 + \theta\bar{\boldsymbol{A}}^{\star}.
\end{align}

\section{Theoretical Results}

In this section, we first compute eigenvalues of the similarity graph induced by both label and feature information, which plays a key role in deriving the error bound of contrastive learning in Section \ref{sec::eigen}. 
Then in Section \ref{sec::bound_semi}, we show that (clean) label information in the semi-supervised setting can help improve the error bound, whereas in Section \ref{sec::bound_noisy}, we prove that the joint training of supervised and unsupervised contrastive learning fails to improve the error bound compared with purely supervised or purely unsupervised contrastive learning.

\subsection{Eigenvalues of Similarity Graph Describing Joint Training}\label{sec::eigen}

We first compute the eigenvalues of the similarity graph describing the weak labels (without describing feature information).
\begin{proposition}\label{lem::tilde_lambda}
	For arbitrary $\boldsymbol{Y}$, assume that the labeled data is class-balanced, i.e. $\sum_{i \in [n_L]} \eta_j(x_i)=n_L/r$ for $j \in [r]$. Assume that the eigenvalues of $\bar{\boldsymbol{A}}_L$ are $\mu_1, \ldots, \mu_n$ (in descending order). Then under Assumption \ref{ass::symnoise}, the eigenvalues of $\bar{\boldsymbol{A}}^{\star}$ are 
	\begin{align}
		& \tilde{\mu}_1 = \ldots = \tilde{\mu}_{n_U+1} = 1, \\
		& \tilde{\mu}_j = \mu_j \alpha, \text{ for } j = n_U+2, \ldots, n,
	\end{align}
    where $\alpha = \mu_j \Big(1-\frac{r}{r-1}\gamma\Big)^2$.
\end{proposition}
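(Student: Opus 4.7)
The plan is to exploit the block-diagonal structure given by Lemma \ref{lem::relation} and diagonalize each block separately. The lower-right block $\boldsymbol{I}_{n_U \times n_U}$ trivially contributes $n_U$ eigenvalues equal to $1$. All the work is in analyzing the upper-left block
\[
	M := \alpha \bar{\boldsymbol{A}}_L + \beta \frac{r}{n_L}\vec{1}_{n_L}\vec{1}_{n_L}^\top,
\]
and the key move is to exhibit a common eigenvector of its two summands so that they can be simultaneously diagonalized.

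First I would verify that $\vec{1}_{n_L}$ is an eigenvector of $\bar{\boldsymbol{A}}_L$ with eigenvalue $1$. Using $\bar{\boldsymbol{A}}_L = \frac{r}{n_L} \boldsymbol{Y}_L \boldsymbol{Y}_L^\top$, the class-balance hypothesis $\sum_{i} \eta_j(x_i) = n_L/r$ gives $\boldsymbol{Y}_L^\top \vec{1}_{n_L} = \frac{n_L}{r}\vec{1}_r$, and the probability-normalization identity $\sum_j \eta_j(x_i) = 1$ gives $\boldsymbol{Y}_L \vec{1}_r = \vec{1}_{n_L}$. Composing these yields $\bar{\boldsymbol{A}}_L \vec{1}_{n_L} = \vec{1}_{n_L}$. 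Since $\bar{\boldsymbol{A}}_L$ is symmetric PSD with row sums $1$, Perron-Frobenius tells us this is the top eigenvalue, i.e. $\mu_1 = 1$, and the remaining eigenvectors lie in $\vec{1}_{n_L}^\perp$.

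Next I would diagonalize $M$ by decomposing along $\vec{1}_{n_L}$ and its orthogonal complement. On the span of $\vec{1}_{n_L}$, both $\bar{\boldsymbol{A}}_L$ and $\vec{1}_{n_L}\vec{1}_{n_L}^\top$ act as scalars, yielding the eigenvalue $\alpha\cdot 1 + \beta \cdot \frac{r}{n_L}\cdot n_L = \alpha + \beta r$. Setting $u := \frac{r\gamma}{r-1}$, a short algebraic check shows $\alpha + \beta r = (1-u)^2 + u(2-u) = 1$; this is the algebraic coincidence that keeps the top eigenvalue pinned at $1$ independent of the noise rate, reflecting preservation of total probability mass. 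On $\vec{1}_{n_L}^\perp$, the rank-one term vanishes, so $M$ acts as $\alpha \bar{\boldsymbol{A}}_L$, contributing eigenvalues $\alpha\mu_2, \ldots, \alpha \mu_{n_L}$.

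Assembling the two blocks, the spectrum of $\bar{\boldsymbol{A}}^{\star}$ consists of $n_U + 1$ eigenvalues equal to $1$ (the $n_U$ from the identity block plus the one from the $\vec{1}_{n_L}$-direction of $M$) together with $\alpha\mu_2,\ldots,\alpha\mu_{n_L}$. Because $\alpha \in [0,1]$ for $\gamma \in [0,\frac{r-1}{r})$ and $\mu_j \in [0,1]$ by PSD-ness and Perron-Frobenius, every $\alpha\mu_j \leq 1$, so after sorting in descending order the $n_U+1$ ones come first, matching the stated $\tilde{\mu}_1 = \cdots = \tilde{\mu}_{n_U+1} = 1$ and $\tilde{\mu}_j = \alpha\mu_{j-n_U}$ for the remaining indices (which appears to be the intended meaning of the statement). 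The main conceptual obstacle is nothing deep but rather the identification that both summands of $M$ share $\vec{1}_{n_L}$ as an eigenvector; this requires both assumptions (class balance and probability normalization of the rows of $\boldsymbol{Y}_L$), after which everything reduces to routine simultaneous diagonalization and the algebraic identity $\alpha + \beta r = 1$.
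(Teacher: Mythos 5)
Your proposal is correct and follows essentially the same route as the paper's proof: identify $\vec{1}_{n_L}$ as a common eigenvector of $\bar{\boldsymbol{A}}_L$ and the rank-one noise term (via class balance and row normalization), simultaneously diagonalize, and use the identity $\alpha + r\beta = 1$ to pin the top eigenvalue of the labeled block at $1$. Your added Perron--Frobenius remark and the note on the index shift $\tilde{\mu}_j = \alpha\mu_{j-n_U}$ are reasonable clarifications of points the paper leaves implicit, but the argument is the same.
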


In Proposition \ref{lem::tilde_lambda}, we show that the eigenvalues of $\bar{\boldsymbol{A}}^{\star}$ rely on the eigenvalues of $\bar{\boldsymbol{A}}$ and consequently rely on the posterior probabilities of clean labels. Specifically, if the true label has a higher posterior probability, i.e. $\max_{j \in [r]}\mathrm{P}(Y=j|x)$ is larger, then the eigenvalues of $\bar{\boldsymbol{A}}$ are larger.
On the other hand, the existence of label noise uniformly shrinks the eigenvalues of $\bar{\boldsymbol{A}}^{\star}$ except for the largest ones, and larger noise rate $\gamma$ results in smaller $\alpha$ and thus leads to smaller eigenvalues of $\bar{\boldsymbol{A}}^{\star}$.
Moreover, the number of largest eigenvalues is decided by the number of unlabeled samples.

Note that $\mathrm{rank}(\boldsymbol{A}^{\star}) \leq \mathrm{rank}(\boldsymbol{Y}_L) + n_U \leq n_U + r$, and therefore we have $\tilde{\mu}_{n_U+r+1} = \ldots = \tilde{\mu}_n = 0$.
Specifically, under the deterministic label scenario, we have $\mu_{n_U+2} = \ldots = \mu_{n_U+r}=1$. Then the eigenvalues of $\bar{\boldsymbol{A}}^{\star}$ become
\begin{align}
	& \tilde{\mu}_1 = \ldots = \tilde{\mu}_{n_U+1} = 1, \\
	& \tilde{\mu}_{n_U+2} = \ldots = \tilde{\mu}_{n_U+r} = \alpha = \Big(1-\frac{r}{r-1}\gamma\Big)^2, \\
	& \tilde{\mu}_{n_U+r+1} = \ldots = \tilde{\mu}_n = 0.
\end{align}

Then in the following proposition, we discuss the eigenvalues of the mixed similarity graph $\boldsymbol{A}_{\theta, \gamma, n_L}$ describing both weak labels and feature information. 
\begin{proposition}\label{lem::eigenmix}
		Denote $\lambda_1,\ldots,\lambda_n$ as the eigenvalues of $\boldsymbol{A}_{\theta,\gamma,n_L}$. Then given the eigenvalues of $\bar{\boldsymbol{A}}_0$, i.e. $\nu_1, \ldots, \nu_n$ and the eigenvalues of $\bar{\boldsymbol{A}}_L$, i.e. $\mu_1, \ldots, \mu_{n_L}$(in descending order), under the deterministic scenarios, when $k \leq n_U$, there holds
\begin{align*}
	&\max\medmath{\big\{\theta+(1-\theta)\nu_{n_L+k},  (1-\theta)\nu_{k+1},
	\theta\alpha+(1-\theta)\nu_{n_L+k-r+1}\big\}}
	\nonumber\\
	&\leq \medmath{
	\lambda_{k+1}
	\leq \theta +  (1-\theta)\nu_{k+1}},
\end{align*}
	and for $k \geq n_U+r$,
	\begin{align*}
		&(1-\theta)\nu_{k+1} \leq
		\lambda_{k+1}
		\nonumber\\
		&\leq\medmath{\min\{\theta + (1-\theta)\nu_{k+1}, \theta\alpha + (1-\theta)\nu_{k-n_U}, 
		(1-\theta)\nu_{k+1-r-n_U}\}.}
	\end{align*}
\end{proposition}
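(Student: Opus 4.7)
}
The plan is to treat $\boldsymbol{A}_{\theta,\gamma,n_L}=(1-\theta)\bar{\boldsymbol{A}}_0+\theta\bar{\boldsymbol{A}}^{\star}$ as a sum of two symmetric PSD matrices and apply Weyl's inequalities, using the closed-form spectrum of $\bar{\boldsymbol{A}}^{\star}$ delivered by Proposition~\ref{lem::tilde_lambda}. In the deterministic scenario that Proposition~\ref{lem::eigenmix} concerns, Proposition~\ref{lem::tilde_lambda} specializes to exactly three plateaus for the eigenvalues of $\theta\bar{\boldsymbol{A}}^{\star}$: the value $\theta$ with multiplicity $n_U+1$, the value $\theta\alpha$ with multiplicity $r-1$, and the value $0$ with multiplicity $n_L-r$. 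The eigenvalues of $(1-\theta)\bar{\boldsymbol{A}}_0$ are simply $(1-\theta)\nu_1\geq\cdots\geq(1-\theta)\nu_n$. Each of the bounds listed in the proposition will match one specific Weyl application that lands on one of these three plateaus of $\theta\bar{\boldsymbol{A}}^{\star}$.

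I would then record the two forms of Weyl's inequality used. Writing $A=(1-\theta)\bar{\boldsymbol{A}}_0$ and $B=\theta\bar{\boldsymbol{A}}^{\star}$, the upper form is $\lambda_{k+1}(A+B)\le\lambda_i(A)+\lambda_j(B)$ whenever $i+j=k+2$, and the lower form is $\lambda_{k+1}(A+B)\ge\lambda_i(A)+\lambda_j(B)$ whenever $i+j=n+k+1$. In addition, since $B\succeq 0$ the trivial monotonicity bound $\lambda_{k+1}(A+B)\ge\lambda_{k+1}(A)=(1-\theta)\nu_{k+1}$ is always available; this is what supplies the $(1-\theta)\nu_{k+1}$ lower bound in both regimes of $k$.

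For the regime $k\le n_U$, I would apply the upper form with $j=1$ (so $\lambda_j(B)=\theta$ and $i=k+1$), producing $\lambda_{k+1}\le\theta+(1-\theta)\nu_{k+1}$. For the two non-trivial lower bounds, I would apply the lower form with $j=n_U+1$, forcing $\lambda_j(B)=\theta$ and $i=n_L+k$, which yields $\theta+(1-\theta)\nu_{n_L+k}$; and with $j=n_U+r$, forcing $\lambda_j(B)=\theta\alpha$ and $i=n_L+k-r+1$, yielding $\theta\alpha+(1-\theta)\nu_{n_L+k-r+1}$. For the regime $k\ge n_U+r$, the three upper bounds come from the upper form applied with, respectively, $j=1$ (landing on the $\theta$ plateau, giving $\theta+(1-\theta)\nu_{k+1}$), $j=n_U+2$ (the smallest index on the $\theta\alpha$ plateau, giving $\theta\alpha+(1-\theta)\nu_{k-n_U}$), and $j=n_U+r+1$ (the smallest index on the $0$ plateau, giving $(1-\theta)\nu_{k+1-r-n_U}$). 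Choosing the smallest valid $j$ within each plateau is what keeps the index $i=k+2-j$ as large as possible and the bound as tight as possible.

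The main obstacle is not a difficult inequality but careful bookkeeping: one must verify that every index pair $(i,j)$ produced above lies in $\{1,\dots,n\}$, and see that the two hypothesized regimes $k\le n_U$ and $k\ge n_U+r$ are precisely what makes the relevant index choices legal (and conversely explains why the intermediate regime $n_U<k<n_U+r$ is excluded from the statement, since there the upper and lower Weyl bounds based on different plateaus begin to conflict). Once these index checks are done, the proof reduces to listing five Weyl applications plus one PSD monotonicity step.
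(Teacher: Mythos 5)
Your proposal is correct and follows essentially the same route as the paper: the paper's cited ``equation 13 of Fulton (2000)'' is exactly the pair of Weyl inequalities $\max_{i+j=n+k+1}\lambda_i(A)+\lambda_j(B)\le\lambda_{k+1}(A+B)\le\min_{i+j=k+2}\lambda_i(A)+\lambda_j(B)$ that you apply, combined with the three eigenvalue plateaus of $\theta\bar{\boldsymbol{A}}^{\star}$ from Proposition~\ref{lem::tilde_lambda}. Your index bookkeeping (including the observation that the regimes $k\le n_U$ and $k\ge n_U+r$ are exactly what makes the relevant index pairs legal) matches the paper's case analysis, and your PSD-monotonicity step is just the Weyl lower bound taken on the zero plateau.
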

In Proposition \ref{lem::eigenmix}, we derive both the upper and lower bounds for the $k+1$-th largest eigenvalue of $\boldsymbol{A}_{\theta, \gamma, n_L}$.
We see that under the deterministic scenario, the upper bound of the $k+1$-th largest eigenvalue of $\boldsymbol{A}_{\theta, \gamma, n_L}$ depends on at most three specific eigenvalues of the unsupervised augmentation graph $\boldsymbol{A}_0$. 
The value of $\lambda_{k+1}$ is also affected by the weighting parameter $\theta$. However, the specific dependence relies on the relative magnitudes of $\nu_{k+1}$, $\nu_{k-n_U}$, and $\nu_{k+1-n_U-r}$.
A perhaps somehow anti-intuitive conclusion is that when $k$ is smaller than $n_U$, the upper bound of $\lambda_{k+1}$ is unaffected by the noise rate. 
Similarly, we notice that for $k \geq n_U+r$, the lower bound of $\lambda_{k+1}$ is unaffected by the noise rate.

\subsection{Error Bound of Joint Training under Semi-supervised Setting}\label{sec::bound_semi}

Recall that the goal of contrastive representation learning is to learn an embedding function $f : \mathcal{X} \to \mathbb{R}^k$. The quality of the learned embedding is often evaluated through linear evaluation. To be specific, denote $B \in \mathbb{R}^{k\times r}$ as the weights of the downstream linear classifier, and the linear predictor is denoted as $g_{f, B}(\bar{x}) = \argmax_{i \in [r]} \mathrm{P}_{x\sim\mathcal{A}(\cdot|\bar{x})}(g_{f, B}(x)=i)$. 
In this paper, we focus on analyzing the error bound of the best possible downstream linear classifier $g_{f^*_{\mathrm{pop}}, B^*}$, where $f^*_{\mathrm{pop}} \in \argmin_{f: \mathcal{X}\to\mathbb{R}^{k}}$ is the minimizer of the population spectral contrastive loss $\mathcal{L}(f)$ defined in \eqref{eq::spectralloss}, and $B^*$ is the optimal weight for the downstream linear classifier.

Following \citet{haochen2021provable}, we assume that the labels are recoverable from augmentations, i.e. we assume there exists a classifier $g$ that can predict $y(x)$ given $x$ with error at most $\delta \in (0,1)$.
\begin{assumption}\label{ass::superror}
	Assume that for some $\delta_u, \delta_s > 0$, there holds
	\begin{align}
		\mathbb{E}_{\bar{x}\sim\mathcal{P}_{\bar{X}}, x\sim\mathcal{A}(\cdot|\bar{x})} \boldsymbol{1}[\hat{y}(x_i) \neq y(\bar{x})] \leq \delta_u.
	\end{align}
	and
	\begin{align}\label{eq::errorunif}
		\frac{1}{n_L} \sum_{i \in[n_L]} \sum_{\ell \in [r]} \eta_{\ell}(x_i) \boldsymbol{1}[\hat{y}(x_i) \neq \ell]
		\leq \delta_s.
	\end{align}
\end{assumption}

Compared with Assumption 3.5 in \citet{haochen2021provable}, Assumption \ref{ass::superror} additionally assumes the recoverable of labels taking expectation under the posterior probability distribution. 
Intuitively, $\delta_u$ represents the error under unsupervised learning, and $\delta_s$ represents the error under supervised learning (with clean posterior distributions). Therefore, it is reasonable to assume that $\delta_s \leq \delta_u$.
Note that Assumption \ref{ass::superror} is a minor revision of the original assumption. The additional assumption \eqref{eq::errorunif} does not change the nature of the original idea of label recovery, and will be used to bound the error term of learning from weakly supervised labels.

Then we derive the error bound of downstream linear evaluation learned by contrastive learning under semi-supervised setting.

\begin{theorem}\label{thm::pop_bound_semi}
	Assume the assumptions in Theorem \ref{thm::pop_bound_noisy} hold.
	Then if 
	\begin{align*}
		\|B^*\|_F 
		&\leq 1/\max\Big\{(1-\theta)\nu_{k},
		\theta+(1-\theta)\nu_{n_L+k-r}\Big\},
	\end{align*}
	we have
	\begin{align}\label{eq::boundsemi}
		\mathcal{E}	
		&\leq 
		\frac{2\big[2\delta_u + [(1+\rho)\delta_s - 2\delta_u]\theta\big]}{1-\theta - (1-\theta)\nu_{k+1}} + 8\delta_u.
	\end{align}
\end{theorem}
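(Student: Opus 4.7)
The plan is to treat this theorem as the clean-label specialization ($\gamma = 0$, so $\alpha = 1$ and $\beta = 0$ by Lemma \ref{lem::relation}) of the more general noisy-label bound in Theorem \ref{thm::pop_bound_noisy}. The starting point is to convert the joint training objective into matrix factorization: by \eqref{eq::mfloss} and the rewriting \eqref{eq::jointA}, minimizing $\mathcal{L}_{\mathrm{JointTraining}}$ is equivalent (up to a constant independent of $F$) to factorizing the mixed normalized adjacency $\boldsymbol{A}_{\theta,\gamma,n_L}$ defined in \eqref{eq::graphmix}. Hence $f^*_{\mathrm{pop}}$ corresponds, up to an orthogonal transform, to the top-$k$ eigenvectors of $\boldsymbol{A}_{\theta,\gamma,n_L}$ scaled by the square roots of the corresponding eigenvalues.

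Next I would follow the linear-probing framework of \citet{haochen2021provable}: build a ``label-recovery'' matrix $\widehat{Y}$ from the predictor $\hat y$ guaranteed by Assumption \ref{ass::superror}, and bound the downstream error $\mathcal{E}$ in terms of the residual $\|(\boldsymbol{A}_{\theta,\gamma,n_L} - FF^\top)\widehat{Y}\|_F^2$, the spectral tail $1 - \lambda_{k+1}$, and $\|B^*\|_F$. The approximation error on the mixed graph should then be decomposed along the two components: the $(1-\theta)\bar{\boldsymbol{A}}_0$ part contributes the unsupervised recoverability error through $\delta_u$, while the $\theta \bar{\boldsymbol{A}}^{\star}$ part contributes the supervised recoverability error through the labeled-posterior assumption \eqref{eq::errorunif}, yielding the $\delta_s$ term. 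The factor $(1+\rho)$ in front of $\delta_s$ is where the labeled-sample fraction and a Cauchy--Schwarz/normalization slack enter; linearity in $\theta$ in the numerator is then a direct consequence of the convex combination of graphs.

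For the quantitative bookkeeping, I would invoke Proposition \ref{lem::eigenmix} at $\alpha=1$. The upper bound $\lambda_{k+1} \leq \theta + (1-\theta)\nu_{k+1}$ gives $1 - \lambda_{k+1} \geq (1-\theta)(1-\nu_{k+1}) = 1 - \theta - (1-\theta)\nu_{k+1}$, which is precisely the denominator of \eqref{eq::boundsemi}. The hypothesis on $\|B^*\|_F$ is tuned to the lower bound on $\lambda_k$ from the same proposition, namely $\lambda_k \geq \max\{(1-\theta)\nu_k,\, \theta + (1-\theta)\nu_{n_L+k-r}\}$ when $\alpha=1$, so that the ``signal'' eigenvalues dominate and the spectral cut-off argument is effective. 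Plugging these two facts into the generic linear-probe inequality and using $\delta_s \leq \delta_u$ to collect the $\theta$- and $(1-\theta)$-weighted error contributions produces the stated $2[2\delta_u + ((1+\rho)\delta_s - 2\delta_u)\theta]$ numerator and the additive $8\delta_u$ slack from the label-recovery step.

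The main obstacle I anticipate is not the spectral argument, which is largely mechanical once Proposition \ref{lem::eigenmix} is in hand, but rather the careful splitting of the approximation error into an unsupervised and a supervised piece whose weights interpolate correctly in $\theta$: one has to verify that the cross term between $\bar{\boldsymbol{A}}_0$ and $\bar{\boldsymbol{A}}^{\star}$ either vanishes or is absorbed into $\delta_u$, and that the $\rho$ factor genuinely captures the class-balance and posterior-normalization overhead arising from \eqref{eq::errorunif}. Once that decomposition is nailed down, specializing Theorem \ref{thm::pop_bound_noisy} to $\gamma = 0$ yields the claim directly, since every $\alpha$ collapses to $1$ and every $\beta$-dependent perturbation term disappears.
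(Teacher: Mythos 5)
Your proposal follows essentially the same route as the paper. The paper first proves an intermediate proposition (Proposition \ref{thm::pop_bound_theta} in the appendix) that bounds $\mathcal{E}$ via Lemma B.3 of \citet{haochen2021provable}, splits the discrepancy $\phi^{\hat y}$ on the mixed graph into a $(1-\theta)$-weighted feature term bounded by $2\delta_u$ and a $\theta$-weighted label term bounded by $\alpha(1+\rho)\delta_s+(1-\alpha)$ (using $\sqrt{w_iw_j}\le\tfrac12(w_i+w_j)$, the degree-ratio bound $w_i/w_j<\rho$, class balance, and \eqref{eq::errorunif}), and then invokes Proposition \ref{lem::eigenmix} exactly as you describe, both for the denominator via $\lambda_{k+1}\le\theta+(1-\theta)\nu_{k+1}$ and for the $\|B^*\|_F$ condition via the lower bound on $\lambda_k$; your worry about a cross term is moot because $\phi^{\hat y}$ is linear in the adjacency matrix, so the two contributions separate exactly. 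One correction to your closing framing, though: the theorem is \emph{not} the $\gamma=0$ specialization of Theorem \ref{thm::pop_bound_noisy}. That theorem is the case $n_U=0$, $k>r$, where Proposition \ref{lem::eigenmix} yields the three-way minimum $\lambda(\boldsymbol{\nu};\theta,\alpha)$ in the denominator; the semi-supervised result lives in the regime $k\le n_U$, where the indices $k-n_U$ and $k+1-r-n_U$ are out of range and the only available upper bound on $\lambda_{k+1}$ is $\theta+(1-\theta)\nu_{k+1}$, giving the denominator $1-\theta-(1-\theta)\nu_{k+1}$. Both theorems are specializations of the common intermediate proposition, and your main derivation (which uses the single-term eigenvalue bound) is the correct one; only the literal shortcut of reading the result off Theorem \ref{thm::pop_bound_noisy} at $\gamma=0$ would fail, since it would produce a different denominator that is not justified when $k\le n_U$.
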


By Theorem \ref{thm::pop_bound_semi}, the error bound of linear probing is larger when the label recovery error $\delta_u$ and $\delta_s$ gets larger.

The error bound in Theorem \ref{thm::pop_bound_semi} attains the minimum when $\theta=1$ if $\rho \leq 2\delta_u/\delta_s-1$.
And therefore we have $\mathcal{E} \leq 2(1+\rho)\delta_s + 8\delta_u \leq 2(1+\rho)\delta_u + 8\delta_u$.
Since $\nu_{k+1} \in [0,1]$, then when $\rho \leq (1+\nu_{k+1})/(1-\nu_{k+1})$, 
$2(1+\rho)\delta_u + 8\delta_u \leq \frac{4\delta_u}{1-\nu_{k+1}}+8\delta_u$.
Recall that in \citet{haochen2021provable}, the error bound of purely unsupervised contrastive learning is $\frac{4\delta_u}{1-\nu_{k+1}}+8\delta_u$.
Our result indicates that semi-supervised information improves the error bound compared with purely unsupervised contrastive learning by using all labeled samples. This theoretical point can also be verified by existing experimental research about semi-supervised contrastive learning, e.g. \citet{assran2020supervision}.

\subsection{Error Bound of Joint Training under Noisy-labeled Setting}\label{sec::bound_noisy}

In Theorem \ref{thm::pop_bound_noisy}, we present the error bound of joint training contrastive learning under noisy label setting.

\begin{theorem}\label{thm::pop_bound_noisy}
	For arbitrary $\boldsymbol{Y}$, assume that the labeled data is class-balanced, i.e. $\sum_{i \in [n_L]} \eta_j(x_i)=n_L/r$ for $j \in [r]$. Denote $\nu_1, \ldots, \nu_n$ as the eigenvalues of $\bar{\boldsymbol{A}}_0$ (in descending order). 
	Denote $\mathcal{E} := \mathrm{P}_{\bar{x}\sim \mathcal{P}_{\bar{X}}, x \sim \mathcal{A}(\cdot|\bar{x})} 
	\big(g_{f^*_{\mathrm{pop}}, B^*} (x) \neq y(\bar{x}) \big)$ as the linear evaluation error, where $B^* \in \mathbb{R}^{r\times k}$ with norm $\|B^*\|_F\leq 1/(1-\theta)\nu_{k}$.
	Assume there exists $\rho > 0$, such that $w_i/w_j < \rho$, for $i, j \in [n]$, and $k>r$.
	Then under the deterministic scenario and Assumptions \ref{ass::symnoise} and \ref{ass::superror}, there holds
	\begin{align}\label{eq::boundnoisy}
		\mathcal{E}
		\leq 
		{\small \frac{2\big[2\delta_u + [\alpha(1+\rho)\delta_s - 2\delta_u + (1-\alpha)]\theta\big]}{1-\lambda(\boldsymbol{\nu}; \theta, \alpha)}} + 8\delta_u,
	\end{align}
	where 
	\begin{align}
		\lambda(\boldsymbol{\nu}; \theta, \alpha) &= \min\{\theta + (1-\theta)\nu_{k+1}, \theta\alpha + (1-\theta)\nu_{k},
		\nonumber\\
		& \qquad \quad \ (1-\theta)\nu_{k+1-r}\}.
	\end{align}
	and $\alpha := \big(1-\frac{r}{r-1}\gamma\big)^2$.
\end{theorem}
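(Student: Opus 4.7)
The plan is to follow the spectral-contrastive error analysis of \citet{haochen2021provable}, but with the mixed similarity graph $\boldsymbol{A}_{\theta,\gamma,n_L}=(1-\theta)\bar{\boldsymbol{A}}_0+\theta\bar{\boldsymbol{A}}^{\star}$ playing the role of their normalized augmentation graph. First, I would invoke the matrix-factorization equivalence \eqref{eq::mfloss} plus the constant-invariance rewriting in \eqref{eq::jointA} to conclude that the population minimizer $f^{*}_{\mathrm{pop}}$ of the joint loss corresponds, after the reweighting $u_x=w_x^{1/2}f(x)$, to the top-$k$ eigenspace of $\boldsymbol{A}_{\theta,\gamma,n_L}$ (Eckart--Young--Mirsky). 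This puts us in the setting where the linear-probe error can be controlled by how much of the one-hot class signal lies outside that top-$k$ eigenspace.

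Next, I would specialize the HaoChen-type decoder argument to our graph: for the optimal downstream head $B^{*}$ with $\|B^{*}\|_F\le 1/(1-\theta)\nu_k$, the expected $0/1$ error $\mathcal{E}$ is bounded by $2\phi/(1-\lambda_{k+1})+8\delta_u$, where $\lambda_{k+1}$ is the $(k{+}1)$-st eigenvalue of $\boldsymbol{A}_{\theta,\gamma,n_L}$ and $\phi$ is the Dirichlet-type quadratic form $\boldsymbol{g}^{\top}(\boldsymbol{I}-\boldsymbol{A}_{\theta,\gamma,n_L})\boldsymbol{g}$ evaluated at the (appropriately normalized) label indicator $\boldsymbol{g}$. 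The $8\delta_u$ term accounts for the gap between the ``recoverable'' label $\hat y$ in Assumption \ref{ass::superror} and the true label $y$, exactly as in the unsupervised baseline; the $\|B^{*}\|_F$ hypothesis guarantees the decoder is Lipschitz enough that this reduction goes through.

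The crux is then to bound $\phi$ in a way that makes the noise contribution explicit. Using linearity, split
\[
\phi=(1-\theta)\,\boldsymbol{g}^{\top}(\boldsymbol{I}-\bar{\boldsymbol{A}}_0)\boldsymbol{g}+\theta\,\boldsymbol{g}^{\top}(\boldsymbol{I}-\bar{\boldsymbol{A}}^{\star})\boldsymbol{g}.
\]
For the first summand, apply the $\delta_u$ half of Assumption \ref{ass::superror} along with the degree-ratio bound $\rho$ to get a contribution of at most $2\delta_u$, mirroring HaoChen's unsupervised calculation. For the second summand, plug in Lemma \ref{lem::relation}: under the deterministic class-balanced scenario, $\bar{\boldsymbol{A}}^{\star}=\alpha\bar{\boldsymbol{A}}_L+\beta\frac{r}{n_L}\vec{1}\vec{1}^{\top}$, so
\[
\boldsymbol{g}^{\top}(\boldsymbol{I}-\bar{\boldsymbol{A}}^{\star})\boldsymbol{g}=(1-\alpha)\boldsymbol{g}^{\top}\boldsymbol{g}+\alpha\,\boldsymbol{g}^{\top}(\boldsymbol{I}-\bar{\boldsymbol{A}}_L)\boldsymbol{g}-\tfrac{r\beta}{n_L}(\vec{1}^{\top}\boldsymbol{g})^{2}.
\]
The $\alpha$-scaled Dirichlet form is then bounded by $\alpha(1+\rho)\delta_s$ using the clean-posterior inequality \eqref{eq::errorunif} combined with the degree-ratio bound, while the $(1-\alpha)\boldsymbol{g}^{\top}\boldsymbol{g}$ term contributes exactly the advertised noise floor $(1-\alpha)$ (after the normalization implicit in writing $\boldsymbol{g}$); the $\beta\vec{1}\vec{1}^{\top}$ piece only helps (negative contribution under class balance), so it is discarded. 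Summing the two summands gives the bracketed numerator $2\delta_u+[\alpha(1+\rho)\delta_s-2\delta_u+(1-\alpha)]\theta$.

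Finally, I would upper-bound $\lambda_{k+1}(\boldsymbol{A}_{\theta,\gamma,n_L})$ by the quantity $\lambda(\boldsymbol{\nu};\theta,\alpha)$ directly via the $k\ge n_U+r$ clause of Proposition \ref{lem::eigenmix} (specialized to $n_U=0$, hence $k>r$), producing the denominator $1-\lambda(\boldsymbol{\nu};\theta,\alpha)$. Plugging the Dirichlet bound into $\mathcal{E}\le 2\phi/(1-\lambda_{k+1})+8\delta_u$ and multiplying through by the outer factor of two yields \eqref{eq::boundnoisy}. I expect the main obstacle to be the careful accounting in the third paragraph: isolating the $(1-\alpha)$ ``irreducible noise'' term from the $\alpha(1+\rho)\delta_s$ ``attenuated signal'' term in the supervised Dirichlet form, and verifying that the $\|B^{*}\|_F$ hypothesis is compatible with the eigenvalue scaling produced by the joint graph so that the decoder step stays valid uniformly in $\theta$ and $\gamma$.
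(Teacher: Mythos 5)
Your proposal follows essentially the same route as the paper: invoke HaoChen et al.'s Lemma B.3 to reduce $\mathcal{E}$ to $2\phi/(1-\lambda_{k+1})+8\delta_u$, split $\phi$ linearly into an unsupervised part bounded by $2\delta_u$ and a supervised part bounded via Lemma \ref{lem::relation} and the degree ratio $\rho$ by $\alpha(1+\rho)\delta_s+(1-\alpha)$, and control $\lambda_{k+1}$ with the $n_U=0$, $k>r$ case of Proposition \ref{lem::eigenmix}. The only (cosmetic) divergence is your bookkeeping of the $(1-\alpha)$ term — you pull it out of the diagonal of the quadratic form and discard the negative $\beta\vec{1}\vec{1}^\top$ piece, whereas the paper keeps the cut form and bounds the uniform $\beta$-edge contribution by its maximum $r\beta=1-\alpha$ — which yields the identical numerator.
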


The bound in Theorem \ref{thm::pop_bound_noisy} gets larger when 
the noise rate $\gamma$ and 
the label recovery error $\delta_u$ and $\delta_s$ gets larger.
It is worth noting that Theorem \ref{thm::pop_bound_noisy} shows that joint training fails to improve the error bound. 
Specifically, since the numerator is positive, this bound is equivalent to the minimum of the following three terms
\begin{align}
	\frac{2\big[2\delta_u + [\alpha(1+\rho)\delta_s - 2\delta_u + (1-\alpha)]\theta\big]}{1-\theta - (1-\theta)\nu_{k+1}} + 8\delta_u,
\end{align}
\begin{align}
	\frac{2\big[2\delta_u + [\alpha(1+\rho)\delta_s - 2\delta_u + (1-\alpha)]\theta\big]}{1 - \theta\alpha - (1-\theta)\nu_{k}} + 8\delta_u,
\end{align}
and
\begin{align}
	\frac{2\big[2\delta_u + [\alpha(1+\rho)\delta_s - 2\delta_u + (1-\alpha)]\theta\big]}{1 - (1-\theta)\nu_{k+1-r}} + 8\delta_u.
\end{align}
Since each of the three terms is a monotonic function with respect to $\theta$, the minimum must be attained at the end of the range of $\theta$, i.e. $\theta=0$ or $\theta=1$. Therefore, the minimum of $\mathcal{E}$ must be attained at $\theta=0$ or $\theta=1$, too.
Specifically, when the noise rate is relatively high, i.e. $\gamma > \gamma_{\mathrm{threshold}} := \frac{r-1}{r}\Big(1-\sqrt{\frac{1-\nu_{k+1}-2\delta_u}{(1-(1+\rho)\delta_s)(1-\nu_{k+1})}}\Big)$, $\mathcal{E}$ achieves its minimum at $\theta=0$, whereas when the noise rate is relatively low, i.e. $\gamma < \gamma_{\mathrm{threshold}}$, $\mathcal{E}$ achieves its minimum at $\theta=1$. 
Either way, joint training of supervised and unsupervised contrastive learning ($\theta \in (0,1)$) does not improve the error bound compared with purely supervised or purely unsupervised contrastive learning.

Then we derive the finite sample bound for the linear probing error for contrastive learning under label noise, which explicitly depends on the training set size $n$. 

\begin{theorem}\label{thm::sample_bound_noisy}
 Denote $\widehat{\mathcal{E}}:=\mathrm{P}_{\bar{x}\sim \mathcal{P}_{\bar{X}}, x \sim \mathcal{A}(\cdot|\bar{x})} \big(g_{\hat{f}, \widehat{B}} (x) \neq y(\bar{x}) \big)$ as the generalization bound for the linear probing error. For any labeling function $\hat{y}: \mathcal{X} \to [r]$, there exists a linear probe $\widehat{B} \in \mathbb{R}^{r\times k}$ such that with probability at least $1 - \varepsilon$ over the randomness of data, we have
	\begin{align}\label{eq::sample_bound_noisy}
		\widehat{\mathcal{E}} 
		&\leq {\small \min_{1\leq k'\leq k} \bigg(\frac{2\big[2\delta_u + [\alpha(1+\rho)\delta_s - 2\delta_u + (1-\alpha)]\theta\big]}{1-\lambda(\boldsymbol{\nu}; \theta, \alpha, k'+1)}}
		\nonumber\\
		&+{\small \frac{4k'\Big[c_1 \cdot \widehat{R}_{n/2}(\mathcal{F}) + c_2 \big(\sqrt{\frac{\log2/\delta}{n}}+\varepsilon\big)\Big]}{\big((1-\theta)\nu_{k'} - \lambda(\boldsymbol{\nu}; \theta, \alpha, k+1)\big)^2} \bigg)+ 8\delta_u,}
	\end{align}
	where $\widehat{R}_{n/2}(\mathcal{F})$ is the maximal possible empirical Rademacher complexity of $\mathcal{F}$ over $n/2$ data, $\lambda(\boldsymbol{\nu}; \theta, \alpha, k+1) = \min\{\theta + (1-\theta)\nu_{k+1}, \theta\alpha + (1-\theta)\nu_{k}, (1-\theta)\nu_{k+1-r}\}$, $\alpha := \big(1-\frac{r}{r-1}\gamma\big)^2$, $c_1 \lesssim k^2\kappa^2+k\kappa$ and $c_2 \lesssim k\kappa^2 + k^2\kappa^4$.
\end{theorem}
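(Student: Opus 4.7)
The plan is to combine the population bound of Theorem \ref{thm::pop_bound_noisy} with a uniform-convergence argument for the empirical spectral contrastive loss and a matrix-perturbation step, following the blueprint of \citet{haochen2021provable} but adapted to the mixed similarity graph $\bar{\boldsymbol{A}}_{\theta,\gamma,n_L} = (1-\theta)\bar{\boldsymbol{A}}_0 + \theta \bar{\boldsymbol{A}}^{\star}$.

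First I would establish a uniform bound on $|\widehat{\mathcal{L}}(f) - \mathcal{L}(f)|$. Because $\mathcal{L}(f)$ is an expectation of bounded polynomial functions of the inner products $f(x)^\top f(x')$ over a class $\mathcal{F}$ of features with $\|f\|_\infty \leq \kappa$, standard symmetrization and Lipschitz contraction yield, with probability at least $1-\varepsilon$ over the data,
\[
\sup_{f \in \mathcal{F}} \bigl|\widehat{\mathcal{L}}(f) - \mathcal{L}(f)\bigr| \;\lesssim\; c_1 \widehat{R}_{n/2}(\mathcal{F}) + c_2 \Bigl(\sqrt{\tfrac{\log(2/\delta)}{n}} + \varepsilon\Bigr),
\]
with $c_1 \lesssim k^2\kappa^2 + k\kappa$ and $c_2 \lesssim k\kappa^2 + k^2\kappa^4$ arising from Lipschitz constants of the loss in $f$ and from its range. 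Via the equivalence \eqref{eq::mfloss}, this immediately translates into a matrix-factorization gap: the empirical minimizer's factorization $\widehat{F}\widehat{F}^\top$ lies within $2\sup_{f}|\widehat{\mathcal{L}}(f)-\mathcal{L}(f)|$ in squared Frobenius norm of the best rank-$k$ approximation to $\bar{\boldsymbol{A}}_{\theta,\gamma,n_L}$ given by Eckart--Young.

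Next, for each candidate $k' \leq k$, I would apply a Davis--Kahan-type argument to compare the column space of $\widehat{F}$ with the top-$k'$ eigenspace of $\bar{\boldsymbol{A}}_{\theta,\gamma,n_L}$. The relevant spectral gap is $(1-\theta)\nu_{k'} - \lambda(\boldsymbol{\nu};\theta,\alpha,k+1)$, where the lower bound on the $k'$-th eigenvalue comes from the $(1-\theta)\nu_{k'}$ term in the lower bound of Proposition \ref{lem::eigenmix}, and the upper bound $\lambda(\boldsymbol{\nu};\theta,\alpha,k+1)$ on $\lambda_{k+1}$ comes from the upper bound in the same proposition. Re-running the proof of Theorem \ref{thm::pop_bound_noisy} with $\widehat{F}$ in place of the population minimizer, the population portion of the argument contributes precisely the first summand of \eqref{eq::sample_bound_noisy}, while the perturbation contributes the second summand; the prefactor $4k'$ arises because the downstream linear classifier projects $k'$ candidate label directions through the perturbed subspace. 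Taking a minimum over $k' \in \{1,\ldots,k\}$, and carrying through the additive $8\delta_u$ from Assumption \ref{ass::superror} unchanged from the population analysis, yields the bound as stated.

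The main obstacle will be carrying the Davis--Kahan subspace perturbation through to the linear-probe evaluation with the stated $1/\mathrm{gap}^2$ rather than $1/\mathrm{gap}$ scaling: the population argument of Theorem \ref{thm::pop_bound_noisy} already divides by one copy of the spectral gap (through the norm constraint on $B^*$), and perturbing $\widehat{F}$ away from the top-$k'$ eigenspace of $\bar{\boldsymbol{A}}_{\theta,\gamma,n_L}$ introduces a second such factor via the squared subspace distance when one transfers a population linear probe to the empirical features. Tracking this quadratic amplification cleanly, while simultaneously verifying that the implicit constants inside $c_1$ and $c_2$ reduce to the stated polynomial growth in $k$ and $\kappa$, is where the bulk of the technical work lies.
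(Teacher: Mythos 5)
Your proposal follows essentially the same route as the paper: the paper also combines a Rademacher-complexity generalization bound for the empirical spectral loss (its Proposition A.1, with the same $c_1$, $c_2$) with HaoChen et al.'s approximate-minimizer downstream bound $\min_{k'}\big(2\phi^{\hat y}/(1-\lambda_{k'+1}) + 4k'\epsilon_0/(\lambda_{k'}-\lambda_{k+1})^2\big) + 8\delta_u$, and then substitutes the eigenvalue bounds of Proposition \ref{lem::eigenmix} and the bound on $\phi^{\hat y}$, exactly as you describe. The only difference is that you sketch the internals (symmetrization, Davis--Kahan) of steps the paper imports as black boxes, and the paper additionally notes the minor restriction $k'\leq k+1-r$ needed to keep the eigengap lower bound nonnegative.
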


Note that in Theorem \ref{thm::sample_bound_noisy}, as the training size $n \to \infty$, the sample error term (second term) approximates $0$, and therefore the bound in Theorem \ref{thm::sample_bound_noisy} degenerates to that in Theorem \ref{thm::pop_bound_noisy}.

For a given sample size $n < \infty$, we observe a trade-off in the choice of $k$. Specifically, as $k$ increases, the approximation error (1st term in \eqref{eq::sample_bound_noisy}) decreases, whereas the sample error (2nd term in \eqref{eq::sample_bound_noisy}) increases. It will lead to the following two cases: 1) when $k$ is small, the approximation error could be very large since $\lambda_{k+1}$ is large; 2) when $k$ is large, the eigen gap $\lambda_{k'}-\lambda_{k+1} \to 0$ since both $\lambda_{k'}$ and $\lambda_{k+1}$ are very small, and accordingly the sample error goes to infinity. This suggests that we should choose a moderate feature dimension $k$, so that both approximation and sample error terms are relatively small.

\subsection{Discussions}\label{sec::discussion}

By comparing Theorems \ref{thm::pop_bound_semi} and \ref{thm::pop_bound_noisy}, we show that clean semi-supervised labels help improve the downstream linear error bound of contrastive representation learning, whereas jointly training supervised and unsupervised contrastive losses fails to improve the error bound under noisy labels. In other words, the winner of supervised and unsupervised contrastive learning itself serves as a strong baseline for noisy label contrastive learning.
This theoretical finding partly explains why the intuitive joint-training method is not investigated by the community, and why complex algorithmic design such as label denoising is a popular approach to leveraging noisy labels in contrastive learning.

For technical contributions, although the theoretical analysis is based on \citet{haochen2021provable}, our analysis is essentially different from existing works in the following aspects: 1) We for the first time establish a theoretical framework for weakly supervised contrastive learning, where we translate the label information into a similarity graph, whereas existing works analyzed pure unsupervised contrastive learning; 2) The main technical difficulty of our analysis is to discuss the eigenvalues of the mixed similarity graph containing both label and feature information (Proposition \ref{lem::eigenmix}), rather than to utilize existing results about self-supervised contrastive learning. 

The joint training of SimCLR and SupCon is also discussed in previous works \citep{islam2021broad,chen2022perfectly}, which focus on the empirical improvement of transfer performances whereas we focus on the theoretical properties of joint training. More details can be found in Appendix \ref{app::compare}.

\section{Experiments}

Recall that it is already empirically verified by \citet{assran2020supervision} that clean semi-supervised labels help improve over unsupervised contrastive learning.
Therefore, in this section, we only empirically verify our theoretical results that noisy labels have limited effects in improving the performance of contrastive learning, and show that the winner of SupCon and SimCLR itself serves as a strong baseline for contrastive learning with noisy labels. 
Based on this, we discuss that complex designs are imperative for improving contrastive representation learning with noisy labels.

\subsection{Experimental Setups}\label{sec::exp_setting}
We conduct numerical comparisons on the CIFAR-10 and TinyImageNet-200 benchmark datasets. 
Because the standard supervised contrastive learning algorithm SupCon \citep{khosla2020supervised} is adapted from the self-supervised contrastive learning framework SimCLR \citep{chen2020simple}, for fair comparisons, we use SupCon as the supervised contrastive loss, and SimCLR as the self-supervised contrastive loss. We argue that we can to a large extent verify the theoretical insights discussed in the previous section, even if the theoretical parts consider the spectral contrastive loss. 
First of all, as shown in the original paper, spectral contrastive loss has comparative empirical performances with respect to that of SimCLR.
Besides, the theoretical evidence can be found in \citet{johnson2022contrastive}, which proves that by interpreting the exponentiated dot product $e^{f(x)^{\top}f(x')}$ as the similarity and treating the exponential and temperature term $\tau$ as part of the model instead of part of the objective, InfoNCE loss and Spectral contrastive loss share the same population minimum. 
That means, by adopting similar kernel deriviations, our work also has the potential to extend to other contrastive losses including standard InfoNCE.

We follow the experimental setting of SimCLR and SupCon. 
Specifically, we use ResNet-50 as the encoder and a 2-layer MLP as the projection head. 
We set the batch size as $1024$. We use $1000$ epochs for training representations.
We use the SGD optimizer with the learning rate $0.5$ decayed at the $700$-th, $800$-th, and $900$-th epochs with a weight decay $0.1$.
We run experiments on 4 NVIDIA Tesla V100 32GB GPUs.
The data augmentations we use are random crop and resize (with random flip), color distortion, and color dropping.
We evaluate the self-supervised learned representation by linear evaluation protocol, where a linear classifier is trained on the top of the encoder, and regard its test accuracy as the performance of the encoder.
$100$ epochs are used for linear probing on the clean data.
The symmetric noisy labels are generated by flipping the labels of a given proportion of training samples uniformly to one of the other class labels.
For the CIFAR-10 dataset, we run experiments with noise rate $\{0.1, 0.2, 0.3, 0.4, 0.5, 0.6\}$ and for TinyImageNet-200, we run experiments with noise rate $\{0.2, 0.4, 0.6, 0.8\}$.

\subsection{Parameter Analysis of $\theta$}

We first conduct an analysis of the parameter $\theta$ for the joint training of SimCLR and SupCon. 
In Figure \ref{fig::theta_cifar}, we plot the optimal $\theta$ for $\mathcal{L}_{\mathrm{JointTraining}}$ on the CIFAR-10 dataset across various noise rates. We show that when the noise rate is relatively high, $\theta$ is relatively small, and $\mathcal{L}_{\mathrm{JointTraining}}$ relies more on unsupervised learning, whereas when the noise rate is relatively low, $\theta$ is relatively large, and $\mathcal{L}_{\mathrm{JointTraining}}$ relies more on supervised learning. 
Moreover, the optimal $\theta$ lies very close to either $0$ or $1$, which indicates that only one loss mainly contributes to the joint training. 
On the other hand, in Figure \ref{fig::theta_img}, we plot the performance changes of $\mathcal{L}_{\mathrm{JointTraining}}$ with respect to the parameter $\theta$ under noise rates $0.4$ and $0.8$ on the TinyImageNet-200 dataset. We show that under both noise rates, as $\theta$ increases, the accuracy of $\mathcal{L}_{\mathrm{JointTraining}}$ decreases. Moreover, the performance drop of $\mathcal{L}_{\mathrm{JointTraining}}$ under high noise rates is more significant than that under low noise rates, indicating that too much noisy label information hurts the performance of joint training, especially when the noise rate is high.

\begin{figure}[!h]
\vspace{-3mm}
	\centering
	\subfigure[Optimal $\theta$ for various noise rates on CIFAR-10.]{
		\includegraphics[width=0.46\linewidth]{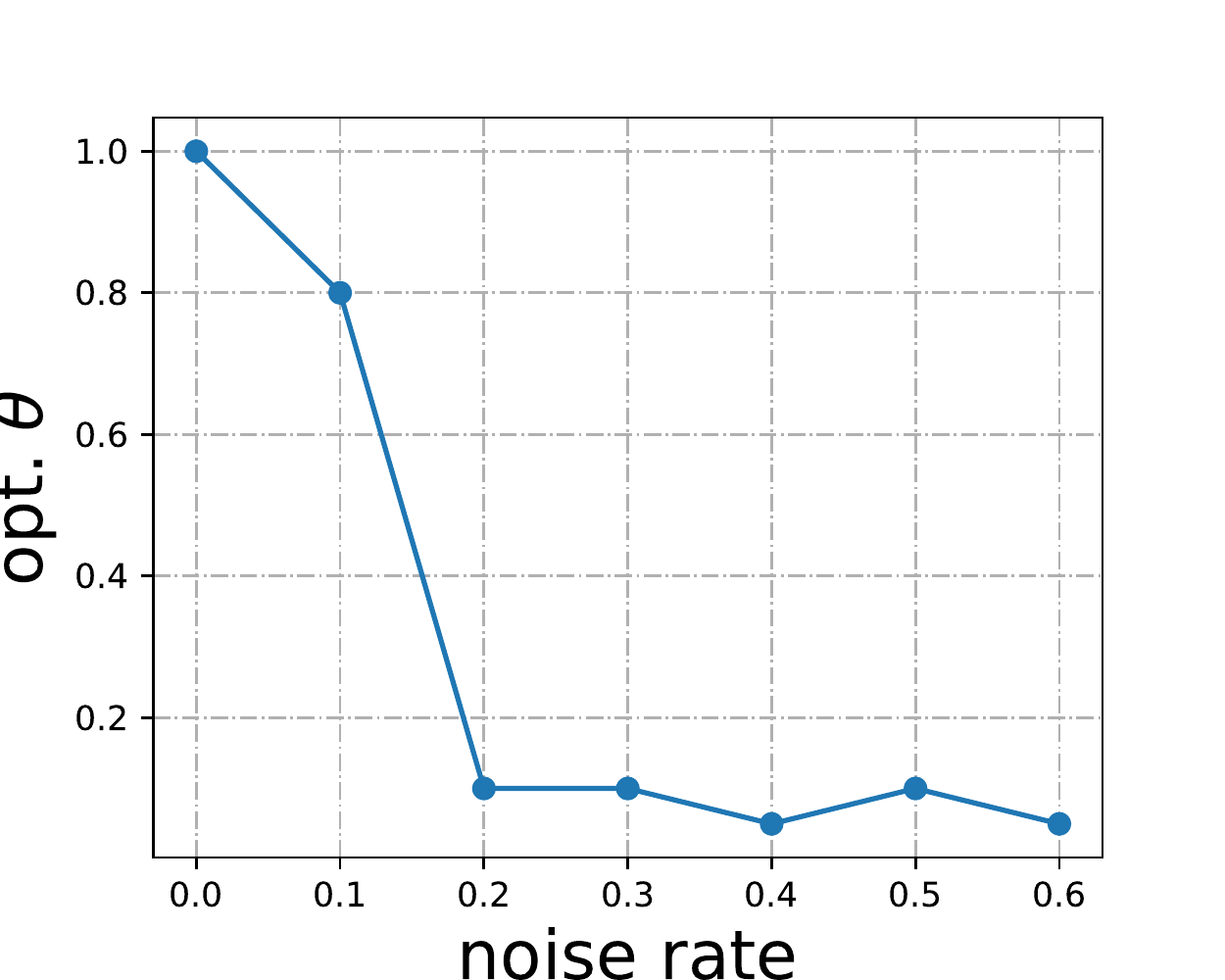}
		\label{fig::theta_cifar}
	}
\hfill
	\subfigure[Performance change w.r.t.~$\theta$ on TinyImageNet-200.]{
		\includegraphics[width=0.46\linewidth]{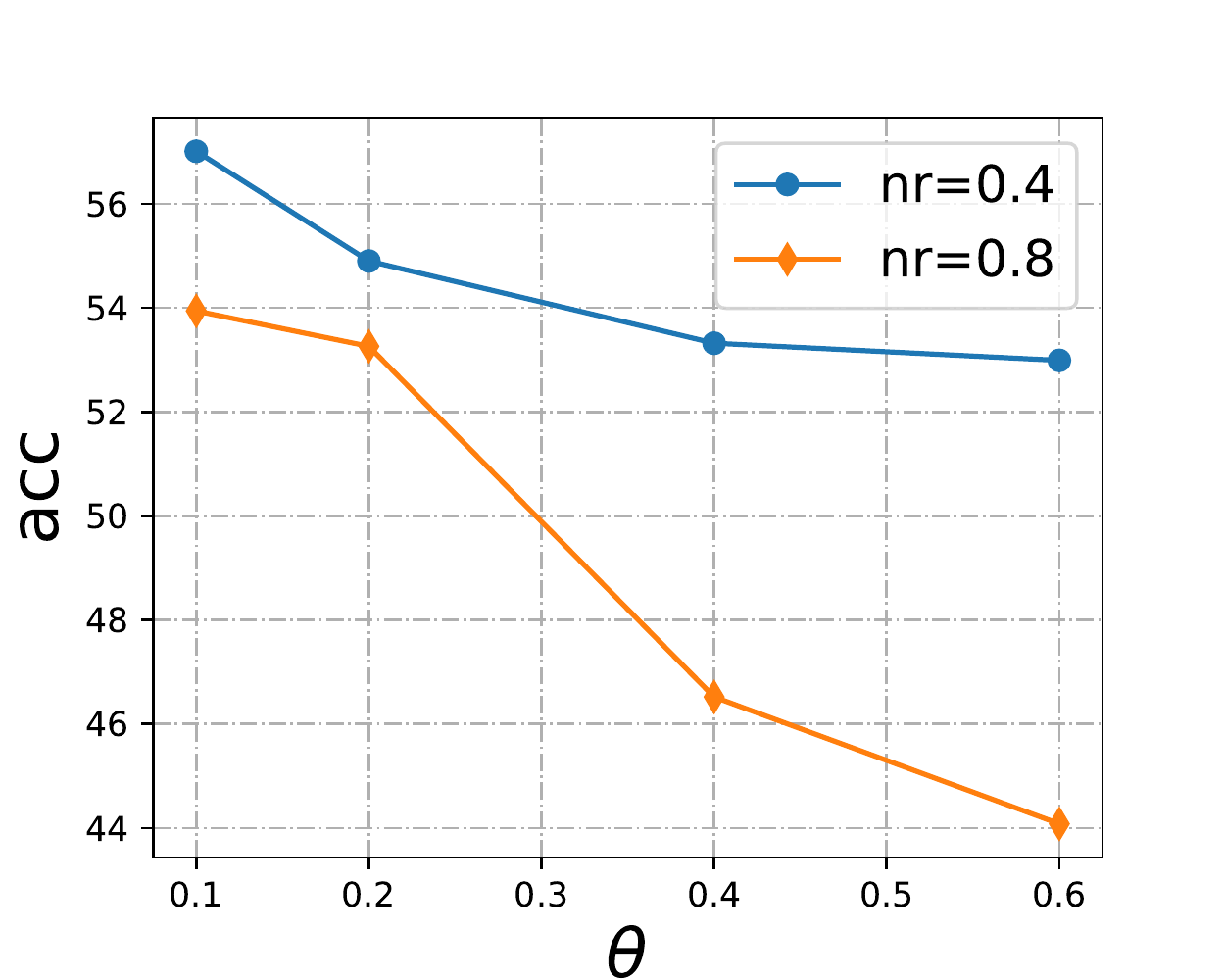}
		\label{fig::theta_img}
	}
         \vspace{-0.1 in} 
	\caption{Parameter analysis of $\theta$.}
	\label{fig::theta}
\end{figure}

\subsection{Parameter Analysis of $k$}

We empirically verify the discussions following Theorem \ref{thm::sample_bound_noisy} about the feature dimension $k$.
We train the joint objective \eqref{eq::graphmix} with $\theta=0.1$ using ResNet-50 on the CIFAR-10 dataset under $30\%$ label noise with feature dimension varying in $\{128,256,512,1024,2048\}$, and report the linear probing accuracy in Table \ref{tab::k}.

	\begin{table}[!h]
	\centering
	\caption{Parameter analysis of $k$.}
 \vspace{-0.1 in}
	\begin{tabular}{cccccc}
		\toprule
		$k$ & 128 & 256 & 512 & 1024 & 2048 \\
		\midrule
		Acc (\%) & 91.89 & \textbf{92.18} & 92.12 & 91.77 & 90.9 \\
		\bottomrule
	\end{tabular}
	\label{tab::k}
\end{table}

We observe that as $k$ increases, the linear probing accuracy first increases and then decreases, which validates the theoretical insight that we should choose a moderate feature dimension $k$ in Theorem \ref{thm::sample_bound_noisy}.

\subsection{The Winner of SupCon and SimCLR Serves as a Strong Baseline}\label{sec::exp_baseline}

\begin{figure}[!h]
	\centering
	\includegraphics[width=\linewidth]{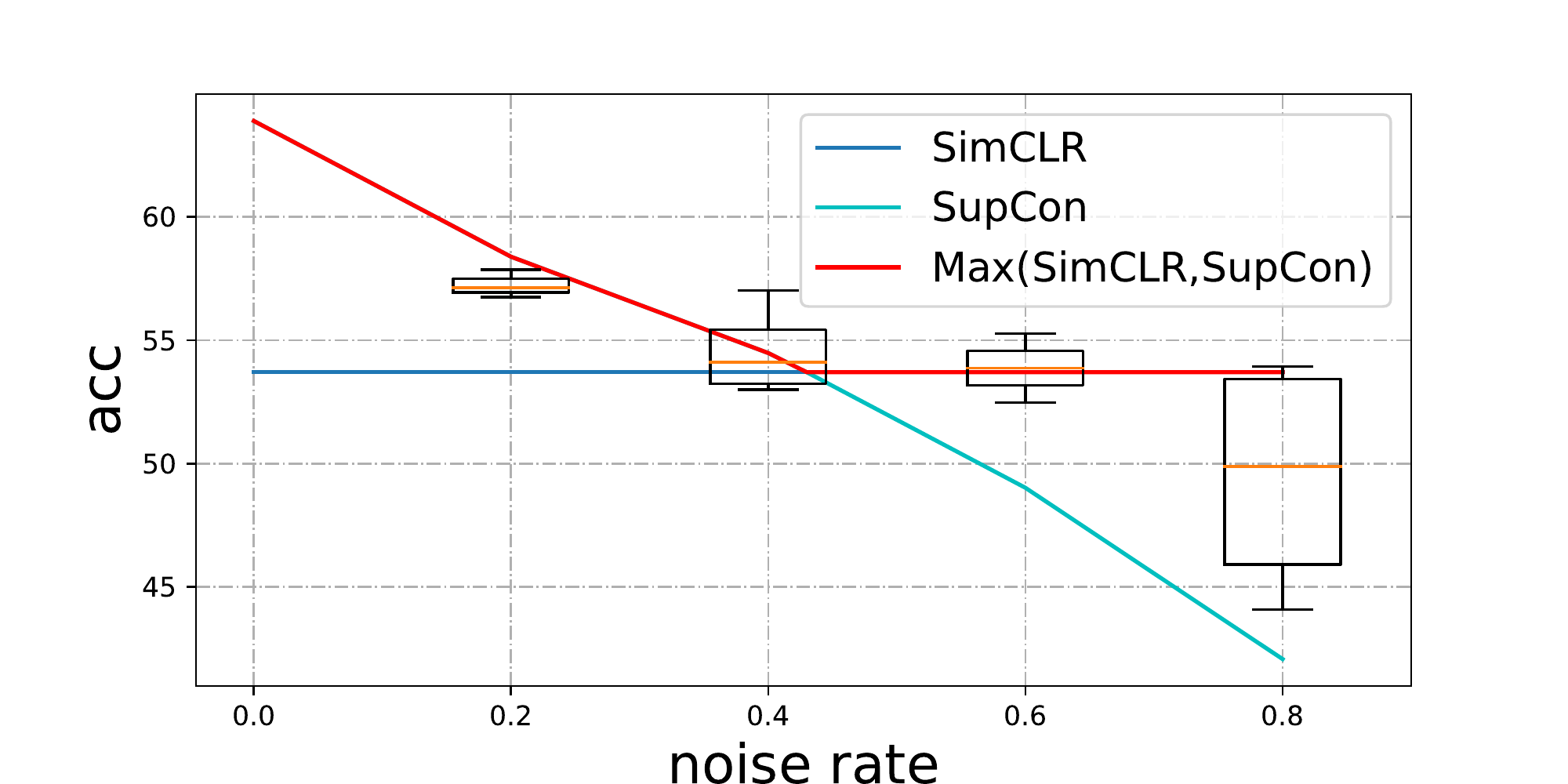}
	\caption{The winner of SupCon and SimCLR serves as a strong baseline of joint training under label noise on the TinyImageNet-200 dataset.}
	\label{fig::max_img}
  \vskip -0.2in
\end{figure}

In this section, we show that under label noise, joint training of SupCon and SimCLR has limited improvement over a simple but strong baseline, that is, the winner of SupCon and SimCLR on the TinyImageNet-200 dataset.
Specifically, in Figure \ref{fig::max_img}, the box plot show the linear probing performances of $\mathcal{L}_{\mathrm{JointTraining}}$ with $\theta \in \{0.1,0.2,0.4,0.6\}$ under noise rates $\{0.2,0.4,0.6,0.8\}$ respectively. And the curves respectively show the performance of SimCLR, SupCon, and the winner of the two. We can see that the ``Max'' performance of SupCon and SimCLR lies within the box plots, indicating that the winner of SupCon and SimCLR has a comparative or sometimes better performance compared with the jointly trained model. Additionally, we observe the same trends on tasks like detection, segmentation, and fine-tuning in Appendix \ref{app::transfer}. These experimental results verify the theoretical result in Theorem \ref{thm::pop_bound_noisy} that joint training does not improve the error bound.

\section{Conclusion}
In this paper, we establish a theoretical framework for weakly supervised contrastive learning, which is compatible with the settings of both noisy label learning and semi-supervised learning.
We take spectral contrastive learning as a proxy for theoretical analysis.
By formulating a mixed similarity graph induced by both weakly supervised label information and unsupervised feature information, we analyze the weakly supervised spectral contrastive learning based on the framework of spectral clustering, and derive the downstream linear evaluation error bound.
Our theoretical results show that semi-supervised information improves the downstream error bound, whereas, under the setting of symmetric label noise, we prove that jointly training supervised and unsupervised contrastive losses fail to improve the error bound. 
Our theoretical findings here provide new insights for the community to rethink the role of weak supervision in helping the representation of contrastive learning. 
For future works, we will investigate the effect of more complex weak supervision, such as active learning and label-dependent label noise, on contrastive learning. 

\section*{Acknowledgements}
Yisen Wang is partially supported by the National Key R\&D Program of China (2022ZD0160304), the National Natural Science Foundation of China (62006153), Open Research Projects of Zhejiang Lab (No. 2022RC0AB05), and Huawei Technologies Inc. Weiran Huang is funded by MSRA.

\bibliography{NLCL}
\bibliographystyle{ICML2023}

\clearpage
\appendix
\onecolumn
\begin{center}
    \Large \bf Appendix
\end{center}

The appendix consists of the proofs for lemmas and theorems (Section~\ref{sec::proof}) and additional experimental results (Section~\ref{sec::add_exp}).

\section{Proofs}\label{sec::proof}

\subsection{Proof of Lemma \ref{lem::relation}}

\begin{proof}
	Under Assumption \ref{ass::symnoise}, we have
	\begin{align}
		(\boldsymbol{T}^2)_{i,j}
		&=\left\{
		\begin{aligned}
			&(1-\gamma)^2 + \gamma^2/(r-1), &\ i=j \\
			&2\gamma(1-\gamma)/(r-1) + (r-2)\gamma^2/(r-1)^2, &\ i\neq j \\
		\end{aligned}
		\right.
		\nonumber\\
		&=\left\{
		\begin{aligned}
			& (1-\gamma)^2 + \gamma^2/(r-1), &\ i=j \\
			& \frac{\gamma}{r-1}\Big(2-\frac{r}{r-1}\gamma\Big), &\ i\neq j. \\
		\end{aligned}
		\right.
	\end{align}
	That is, we have 
	\begin{align}
		\boldsymbol{T}^2 
		&= \Big[(1-\gamma)^2 + \gamma^2/(r-1)-\frac{\gamma}{r-1}\Big(2-\frac{r}{r-1}\gamma\Big)\Big] \boldsymbol{I}_{r \times r} 
		\nonumber\\
		&\qquad + \frac{\gamma}{r-1}\Big(2-\frac{r}{r-1}\gamma\Big) \vec{1}_r\vec{1}_r^\top 
		\nonumber\\
		&= \Big(1-\frac{r}{r-1}\gamma\Big)^2 \boldsymbol{I}_{r \times r} + \frac{\gamma}{r-1}\Big(2-\frac{r}{r-1}\gamma\Big) \vec{1}_r\vec{1}_r^\top
		\nonumber\\
		&:= \alpha \boldsymbol{I}_{r \times r} + \beta \vec{1}_r\vec{1}_r^\top.
	\end{align}
	Given $\gamma \in [0,1)$, we have
	\begin{align}
		\boldsymbol{A}^{\star}_L 
		= \boldsymbol{Y}_L \boldsymbol{T}^2 \boldsymbol{Y}_L^\top 
		&= \boldsymbol{Y}_L \big(\alpha \boldsymbol{I}_{r \times r} + \beta \vec{1}_r\vec{1}_r^\top \big) \boldsymbol{Y}_L^\top 
		\nonumber\\
		&= \alpha \boldsymbol{Y}_L \boldsymbol{Y}_L^\top + \beta \boldsymbol{Y}_L \vec{1}_r\vec{1}_r^\top \boldsymbol{Y}_L^\top 
		\nonumber\\
		&= \alpha \boldsymbol{A}_L + \beta \vec{1}_{n_L}\vec{1}_{n_L}^\top,
	\end{align}
	where the last equality holds because $\sum_{j} \eta_j(x_i)=1$ for $i \in [n]$.
\end{proof}

and the normalized augmentation graph is 
\begin{align}
	\bar{\boldsymbol{A}}^{\star}
	= \tilde{\boldsymbol{D}}^{-1/2} \boldsymbol{A}^{\star} \tilde{\boldsymbol{D}}^{-1/2},
\end{align}
where 
\begin{align}
	\tilde{\boldsymbol{D}} = 
	\begin{bmatrix}
		\tilde{\boldsymbol{D}}_L & \boldsymbol{0} \\
		\boldsymbol{0} & \boldsymbol{I}_{n_U \times n_U}
	\end{bmatrix},
\end{align}
\begin{align}
	\tilde{\boldsymbol{D}}_L = \mathrm{diag}(d_i),
\end{align}
and
\begin{align}
	d_i &= \sum_{j \in [n_L]} \boldsymbol{A}^{\star}_{i,j} = \alpha \sum_{j \in [n_L]} \sum_{\ell \in [r]} \eta_\ell(x_i) \eta_\ell(x_j) + n_L\beta
	\nonumber\\
	&= \alpha \sum_{\ell \in [r]} \eta_\ell(x_i) \sum_{j \in [n_L]} \eta_\ell(x_j) + n_L\beta
	= \alpha \sum_{\ell \in [r]} \eta_\ell(x_i) n_\ell + n_L\beta
\end{align}

Specifically, when the labeled data is class-balanced, i.e. $n_1=\ldots=n_r=n_L/r$. Then we have
\begin{align}
	d_i = \frac{n_L}{r} \alpha \sum_{\ell \in [r]} \eta_\ell(x_i) + n_L\beta
	= \frac{n_L}{r}\alpha + n_L\beta = \frac{n_L}{r},
\end{align}
and thus 
\begin{align}
	\bar{\boldsymbol{A}}^{\star} = 
	\begin{bmatrix}
		\alpha \frac{r}{n_L} \boldsymbol{A}_L + \beta \frac{r}{n_L} \vec{1}_{n_L}\vec{1}_{n_L}^\top & \boldsymbol{0} \\
		\boldsymbol{0} & \boldsymbol{I}_{n_U \times n_U}
	\end{bmatrix}.
\end{align}

\subsection{Proof of Proposition \ref{lem::tilde_lambda}}

\begin{proof}
	We first prove that $v_1 = \frac{1}{\sqrt{n_L}}\vec{1}_{n_L}$ is an eigenvector of $\bar{\boldsymbol{A}}_L:=\frac{r}{n_L} \boldsymbol{A}_L$ with eigenvalue $\mu_1 = 1$. To be specific, 
	\begin{align}
		\bar{\boldsymbol{A}}_L \cdot \frac{1}{\sqrt{n_L}}\vec{1}_{n_L}
		&= \frac{1}{\sqrt{n_L}} \cdot 
		\frac{r}{n_L} \boldsymbol{A}_L 
		\cdot \vec{1}_{n_L}
		\nonumber\\
		&= \frac{1}{\sqrt{n_L}} \cdot 
		\frac{r}{n_L} \boldsymbol{Y}_L\boldsymbol{Y}_L^\top \vec{1}_{n_L} 
		\nonumber\\
		&= \frac{1}{\sqrt{n_L}} \cdot
		\frac{r}{n_L} \boldsymbol{Y}_L \frac{n_L}{r} \vec{1}_r 
		\nonumber\\
		&= \frac{1}{\sqrt{n_L}} \vec{1}_{n_L},
	\end{align}
	where the second last equality is due to class balance, i.e. $\sum_{i \in [n_L]} \eta_j(x_i)=n_L/r$ for $j \in [r]$, and the last equality holds because $\sum_{j \in [r]} \eta_j(x_i) = 1$ for $i \in [n_L]$.
	
	Therefore, we can rewrite $\bar{\boldsymbol{A}}_L$ as 
	\begin{align}
		\bar{\boldsymbol{A}}_L
		= 
		\begin{bmatrix}
			\frac{1}{\sqrt{n_L}} \vec{1}_{n_L}, v_2, \ldots, v_{n_L}
		\end{bmatrix}
		\begin{bmatrix}
			1 & 0 & \ldots & 0 \\
			0 & \mu_2 & \ldots & 0 \\
			\vdots & \vdots & & \vdots \\
			0 & 0 & \ldots & \mu_{n_L}
		\end{bmatrix}
		\begin{bmatrix}
			\frac{1}{\sqrt{n_L}} \vec{1}_{n_L}^\top \\
			v_2^\top \\
			\vdots \\
			v_{n_L}^\top
		\end{bmatrix}.
	\end{align}
	
	Note that $\frac{1}{n_L}\vec{1}_{n_L}\vec{1}_{n_L}^\top$ can be decomposed as
	\begin{align}
		\frac{1}{n_L}\vec{1}_{n_L}\vec{1}_{n_L}^\top
		&= \Big(\frac{1}{\sqrt{n_L}}\vec{1}_{n_L}\Big) \Big(\frac{1}{\sqrt{n_L}}\vec{1}_{n_L}\Big)^\top
		\nonumber\\
		&= 
		\begin{bmatrix}
			\frac{1}{\sqrt{n_L}} \vec{1}_{n_L}, v_2, \ldots, v_{n_L}
		\end{bmatrix}
		\begin{bmatrix}
			1 & 0 & \ldots & 0 \\
			0 & 0 & \ldots & 0 \\
			\vdots & \vdots & & \vdots \\
			0 & 0 & \ldots & 0
		\end{bmatrix}
		\begin{bmatrix}
			\frac{1}{\sqrt{n_L}} \vec{1}_{n_L}^\top \\
			v_2^\top \\
			\vdots \\
			v_{n_L}^\top
		\end{bmatrix}.
	\end{align}
	Then we have 
	\begin{align}
		\bar{\boldsymbol{A}}^{\star}_L 
		&:= \alpha \frac{r}{n_L} \boldsymbol{A}_L + r\beta \frac{1}{n_L} \vec{1}_{n_L}\vec{1}_{n_L}^\top 
		\nonumber\\
		&= 
		\begin{bmatrix}
			\frac{1}{\sqrt{n_L}} \vec{1}_{n_L}, v_2, \ldots, v_{n_L}
		\end{bmatrix}
		\begin{bmatrix}
			\alpha & 0 & \ldots & 0 \\
			0 & \alpha\mu_2 & \ldots & 0 \\
			\vdots & \vdots & & \vdots \\
			0 & 0 & \ldots & \alpha\mu_{n_L}
		\end{bmatrix}
		\begin{bmatrix}
			\frac{1}{\sqrt{n_L}} \vec{1}_{n_L}^\top \\
			v_2^\top \\
			\vdots \\
			v_{n_L}^\top
		\end{bmatrix}
		\nonumber\\
		&\phantom{=}\cdot
		\begin{bmatrix}
			\frac{1}{\sqrt{n_L}} \vec{1}_{n_L}, v_2, \ldots, v_{n_L}
		\end{bmatrix}
		\begin{bmatrix}
			r\beta & 0 & \ldots & 0 \\
			0 & 0 & \ldots & 0 \\
			\vdots & \vdots & & \vdots \\
			0 & 0 & \ldots & 0
		\end{bmatrix}
		\begin{bmatrix}
			\frac{1}{\sqrt{n_L}} \vec{1}_{n_L}^\top \\
			v_2^\top \\
			\vdots \\
			v_{n_L}^\top
		\end{bmatrix}
		\nonumber\\
		&= 
		\begin{bmatrix}
			\frac{1}{\sqrt{n_L}} \vec{1}_{n_L}, v_2, \ldots, v_{n_L}
		\end{bmatrix}
		\begin{bmatrix}
			\alpha + r\beta & 0 & \ldots & 0 \\
			0 & \alpha\mu_2 & \ldots & 0 \\
			\vdots & \vdots & & \vdots \\
			0 & 0 & \ldots & \alpha\mu_{n_L}
		\end{bmatrix}
		\begin{bmatrix}
			\frac{1}{\sqrt{n_L}} \vec{1}_{n_L}^\top \\
			v_2^\top \\
			\vdots \\
			v_{n_L}^\top
		\end{bmatrix}.
	\end{align}
	Since $\alpha+r\beta=1$, the eigenvalues of $\bar{\boldsymbol{A}}^{\star}_L$ are $1, \alpha\mu_2, \ldots, \alpha\mu_{n_L}$.
	Thus the eigenvalues of 
	\begin{align}
		\bar{\boldsymbol{A}}^{\star} = 
		\begin{bmatrix}
			\bar{\boldsymbol{A}}^{\star}_L & \boldsymbol{0} \\
			\boldsymbol{0} & \boldsymbol{I}_{n_U \times n_U}
		\end{bmatrix}
	\end{align}
	are 
	\begin{align}
		& \tilde{\mu}_1 = \ldots = \tilde{\mu}_{n_U+1} = 1, \\
		& \tilde{\mu}_j = \alpha\mu_j, \text{ for } j = n_U+2, \ldots, n.
	\end{align}
\end{proof}

\subsection{Proof of Proposition \ref{lem::eigenmix}}

\begin{proof}
	By equation 13 in \citet{fulton2000eigenvalues}, for two real symmetric $n$ by $n$ matrix $(1-\theta)\bar{\boldsymbol{A}}_0$ and $\theta\bar{\boldsymbol{A}}^{\star}$, the $k+1$-th largest eigenvalue of $\boldsymbol{A}_{\theta,\lambda,n_L} := (1-\theta)\bar{\boldsymbol{A}}_0 + \theta\bar{\boldsymbol{A}}^{\star}$ can take any value in the interval
	\begin{align}
		\max_{i+j=n+k+1} (1-\theta)\nu_i + \theta\tilde{\mu}_j \leq \lambda_{k+1} \leq \min_{i+j=k+2} (1-\theta)\nu_i + \theta\tilde{\mu}_j.
	\end{align}
	By Proposition \ref{lem::tilde_lambda}, we have
	\begin{align}
		\tilde{\mu}_j = 
		\left\{
		\begin{aligned}
			&1, & j=1, \ldots, n_U+1; \\
			&\alpha\mu_j, & j=n_U+2, \ldots, n_U+r; \\
			&0, & j=n_U+r+1, \ldots, n.\\
		\end{aligned}
		\right.
	\end{align}
	Therefore, we have
	\begin{align}
		&\phantom{=}\max_{i+j=n+k+1} (1-\theta)\nu_i + \theta\tilde{\mu}_j 
		\nonumber\\
		&= \max_{1 \leq i \leq n+k+1} (1-\theta)\nu_i + \theta\tilde{\mu}_{n+k+1-i} 
		\nonumber\\
		&= \max
		\left\{
		\begin{aligned}
			&\theta+(1-\theta)\nu_i, &i=n_L+k, \ldots, n;\\
			&\theta\alpha\mu_{n+k+1-i}+(1-\theta)\nu_i, & i=n_L+k-r+1, \ldots, n_L+k-1;\\
			&(1-\theta)\nu_i, & i=k+1, \ldots, n_L+k-r
		\end{aligned}
		\right.
		\nonumber\\
		&= \max\left\{
		\begin{aligned}
			&\theta+(1-\theta)\nu_{n_L+k} \\
			&\theta\alpha\mu_{n+k+1-i}+(1-\theta)\nu_i, & i=n_L+k-r+1, \ldots, n_L+k-1;\\
			&(1-\theta)\nu_{k+1},
		\end{aligned}
		\right.
	\end{align}
	where the last equality holds because $\{\nu_i\}_{i\in[n]}$ is ranked in descending order. 
	Then when $k \leq n_U$,
	\begin{align}
		\lambda_{k+1} \geq \max\big\{\theta+(1-\theta)\nu_{n_L+k}, \max_{i=n_L+k-r+1, \ldots, n_L+k-1}\{\theta\alpha\mu_{n+k+1-i}+(1-\theta)\nu_i\}, (1-\theta)\nu_{k+1}\big\},
	\end{align}
	when $n_U < k < n_U+r$,
	\begin{align}
		\lambda_{k+1} \geq \max\big\{(1-\theta)\nu_{k+1}, \max_{i=n_L+k-r+1, \ldots, n_L+k-1}\{\theta\alpha\mu_{n+k+1-i}+(1-\theta)\nu_i\}\big\},
	\end{align}
	and when $k \geq n_U+r$,
	\begin{align}
		\lambda_{k+1} \geq (1-\theta)\nu_{k+1}.
	\end{align}
	
	On the other hand, we have
	\begin{align}
		&\phantom{=}\min_{i+j=k+2} (1-\theta)\nu_i + \theta\tilde{\mu}_j 
		\nonumber\\
		&= \min_{1 \leq i \leq k+2} (1-\theta)\nu_i + \theta\tilde{\mu}_{k+2-i}
		\nonumber\\
		&= \min \left\{
		\begin{aligned}
			&\theta + (1-\theta)\nu_i, &i = k+1-n_U, \ldots, k+1; \\
			&\theta\alpha\mu_{k+2-i} + (1-\theta)\nu_i, &i = k+2-r-n_U, \ldots, k-n_U; \\
			&(1-\theta)\nu_i, &i = k+2-n, \ldots, k+1-r-n_U.
		\end{aligned}
		\right.
		\nonumber\\
		&= \min \left\{
		\begin{aligned}
			&\theta + (1-\theta)\nu_{k+1}; \\
			&\theta\alpha\mu_{k+2-i} + (1-\theta)\nu_i, &i = k+2-r-n_U, \ldots, k-n_U; \\
			&(1-\theta)\nu_{k+1-r-n_U}.
		\end{aligned}
		\right.
	\end{align}
	Then when $k \leq n_U$, there holds
	\begin{align}
		\lambda_{k+1} \leq \theta + (1-\theta)\nu_{k+1},
	\end{align}
	when $n_U < k < n_U+r$, there holds
	\begin{align}
		\lambda_{k+1} \leq \min\Big\{\theta + (1-\theta)\nu_{k+1}, \min_{i=k+2-r-n_U, \ldots, k-n_U} \{\theta\alpha\mu_{k+2-i} + (1-\theta)\nu_i\}\Big\},
	\end{align}
	and when $k \geq n_U+r$, there holds
	\begin{align}
		\lambda_{k+1} \leq \min\Big\{\theta + (1-\theta)\nu_{k+1}, \min_{i=k+2-r-n_U, \ldots, k-n_U} \{\theta\alpha\mu_{k+2-i} + (1-\theta)\nu_i\}, (1-\theta)\nu_{k+1-r-n_U}\Big\}.
	\end{align}
	
\end{proof}

\subsection{Proof of Theorem \ref{thm::pop_bound_semi}}

To prove Theorem \ref{thm::pop_bound_semi}, we first prove Proposition \ref{thm::pop_bound_theta}.

\begin{proposition}\label{thm::pop_bound_theta}
	For arbitrary $\boldsymbol{Y}$, assume that the labeled data is class-balanced, i.e. $\sum_{i \in [n_L]} \eta_j(x_i)=n_L/r$ for $j \in [r]$. Denote $\nu_1, \ldots, \nu_n$ as the eigenvalues of $\bar{\boldsymbol{A}}_0$ (in descending order). 
	Denote $\mathcal{E} := \mathrm{P}_{\bar{x}\sim \mathcal{P}_{\bar{X}}, x \sim \mathcal{A}(\cdot|\bar{x})} 
	\big(g_{f^*_{\mathrm{pop}}, B^*} (x) \neq y(\bar{x}) \big)$ as the linear evaluation error, where $B^* \in \mathbb{R}^{r\times k}$ with norm $\|B^*\|_F \leq 1/\lambda_k$.
	Assume there exists $\rho > 0$, such that $w_i/w_j < \rho$, for $i, j \in [n]$.
	Then under the deterministic scenario and Assumptions \ref{ass::symnoise} and \ref{ass::superror}, for $k \leq n_U$,
	there holds
	\begin{align}
		\mathcal{E}	
		&\leq 
		\frac{2\big[2\delta_u + [\alpha(1+\rho)\delta_s - 2\delta_u + (1-\alpha)]\theta\big]}{1-\theta - (1-\theta)\nu_{k+1}} + 8\delta_u,
	\end{align}
	for $n_U+1 \leq k \leq n_U+r-1$, there holds
	\begin{align}
		\mathcal{E}
		&\leq 
		\frac{2\big[2\delta_u + [\alpha(1+\rho)\delta_s - 2\delta_u + (1-\alpha)]\theta\big]}{1-\min\{\theta + (1-\theta)\nu_{k+1}, \theta\alpha + (1-\theta)\nu_{k-n_U}\}} + 8\delta_u,
	\end{align}
	and for $k \geq n_U+r$, there holds
	\begin{align}
		\mathcal{E}
		\leq 
		\frac{2\big[2\delta_u + [\alpha(1+\rho)\delta_s - 2\delta_u + (1-\alpha)]\theta\big]}{1-\lambda(\nu; \theta, \alpha)} + 8\delta_u,
	\end{align}
	where 
	\begin{align}
		\lambda(\boldsymbol{\nu}; \theta, \alpha) &= \min\{\theta + (1-\theta)\nu_{k+1}, \theta\alpha + (1-\theta)\nu_{k-n_U},
		\nonumber\\
		& \qquad \quad \ (1-\theta)\nu_{k+1-r-n_U}\}
	\end{align}
	and $\alpha := \big(1-\frac{r}{r-1}\gamma\big)^2$.
\end{proposition}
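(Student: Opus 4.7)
The plan is to extend the downstream error-bound template of \citet{haochen2021provable} to the mixed similarity graph $\boldsymbol{A}_{\theta,\gamma,n_L}$ defined in \eqref{eq::graphmix}. By \eqref{eq::mfloss} and the rewriting \eqref{eq::jointA}, the population minimizer $f^*_{\mathrm{pop}}$ of the joint training loss is, up to scaling by $\sqrt{w_x}$, the matrix-factorization minimizer of $\|\boldsymbol{A}_{\theta,\gamma,n_L}-FF^\top\|_F^2$, and hence its rows span the top-$k$ eigenspace of $\boldsymbol{A}_{\theta,\gamma,n_L}$. The HaoChen template then yields a bound of the generic form
\begin{align*}
\mathcal{E}\;\leq\;\frac{2\,\Phi_{\hat y}(\boldsymbol{A}_{\theta,\gamma,n_L})}{1-\lambda_{k+1}}\;+\;8\delta_u,
\end{align*}
where $\Phi_{\hat y}$ measures the total mass of edges of the mixed graph that are miscut by any labeling function $\hat y$ promised by Assumption~\ref{ass::superror}. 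The proof then reduces to two independent subproblems: (i) a lower bound on the spectral gap $1-\lambda_{k+1}$, and (ii) an upper bound on the edge-cut $\Phi_{\hat y}$.

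For (i), I would invoke Proposition~\ref{lem::eigenmix} directly. That proposition gives several upper bounds on $\lambda_{k+1}$, namely $\theta+(1-\theta)\nu_{k+1}$, $\theta\alpha+(1-\theta)\nu_{k-n_U}$, and $(1-\theta)\nu_{k+1-r-n_U}$, with the list of active bounds depending on where $k$ sits relative to $n_U$ and $n_U+r$. Substituting the tightest set in each regime produces exactly the three denominators stated in the proposition: only the first bound applies for $k\leq n_U$; the first two apply for $n_U+1\leq k\leq n_U+r-1$; and all three apply for $k\geq n_U+r$.

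For (ii), I would decompose $\boldsymbol{A}_{\theta,\gamma,n_L}=(1-\theta)\bar{\boldsymbol{A}}_0+\theta\bar{\boldsymbol{A}}^{\star}$ and bound $\Phi_{\hat y}$ edge-set by edge-set. The $(1-\theta)\bar{\boldsymbol{A}}_0$ term is the pure augmentation graph, whose edge-cut is bounded by $2(1-\theta)\delta_u$ exactly as in \citet{haochen2021provable} via the unsupervised half of Assumption~\ref{ass::superror}. For the $\theta\bar{\boldsymbol{A}}^{\star}$ piece I would use Lemma~\ref{lem::relation} to split $\bar{\boldsymbol{A}}^{\star}_L$ into the clean label-similarity matrix $\alpha\bar{\boldsymbol{A}}_L$ plus the noise-induced uniform matrix $\beta\frac{r}{n_L}\vec 1_{n_L}\vec 1_{n_L}^\top$. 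The clean piece is handled by the supervised half of Assumption~\ref{ass::superror} (the averaged recovery bound \eqref{eq::errorunif}), contributing $\alpha(1+\rho)\delta_s\cdot\theta$; the $(1+\rho)$ factor is the exchange rate between the uniform measure in \eqref{eq::errorunif} and the degree-weighted measure that $\Phi_{\hat y}$ actually uses, and is where the assumption $w_i/w_j<\rho$ enters. The uniform-noise piece is a worst-case mislabel bound using only $\alpha+r\beta=1$, producing the constant $(1-\alpha)\theta$. Collecting terms yields the claimed numerator $2\delta_u+[\alpha(1+\rho)\delta_s-2\delta_u+(1-\alpha)]\theta$, and Assumption~\ref{ass::superror} on the unlabeled block $\boldsymbol{I}_{n_U\times n_U}$ contributes nothing since its off-diagonal mass is zero.

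The main obstacle I expect is the bookkeeping in step (ii): the HaoChen framework phrases label recovery in a degree-weighted form $\sum_{x,x'} w_{xx'}\mathbf{1}[\hat y(x)\neq \hat y(x')]$, while our supervised assumption \eqref{eq::errorunif} is a flat per-sample average over $[n_L]$. The conversion requires care about how the class-balance hypothesis $\sum_i\eta_j(x_i)=n_L/r$ normalizes degrees in $\bar{\boldsymbol{A}}^{\star}_L$, and how the ratio bound $\rho$ propagates through the weighted-to-uniform exchange; failing to track this cleanly would lose the clean separation between the $\alpha(1+\rho)\delta_s$ term and the $(1-\alpha)$ noise residual, which is precisely what enables the subsequent monotonicity-in-$\theta$ argument used in Theorem~\ref{thm::pop_bound_noisy}.
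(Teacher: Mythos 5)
Your proposal follows essentially the same route as the paper's proof: the HaoChen error-bound template $\mathcal{E}\leq 2\phi^{\hat y}/(1-\lambda_{k+1})+8\delta_u$, Proposition~\ref{lem::eigenmix} for the denominator in the three regimes of $k$, and the decomposition of the edge-cut via Lemma~\ref{lem::relation} into the unsupervised term ($2\delta_u$), the clean supervised term ($\alpha(1+\rho)\delta_s$, with $\rho$ entering exactly as the degree-weighted-to-uniform exchange rate you describe, via $\sqrt{w_iw_j}\leq\tfrac12(w_i+w_j)$ and class balance), and the noise residual ($(1-\alpha)$ from $\alpha+r\beta=1$). The bookkeeping obstacle you flag is precisely what the paper's computation resolves, and your outline handles it correctly.
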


\begin{proof}[Proof of Proposition \ref{thm::pop_bound_theta}]
	By Lemma B.3 of \citet{haochen2021provable}, for any labeling function $\vec{y}:\mathcal{X} \to [r]$, there exists a linear probe $B^* \in \mathbb{R}^{r\times k}$ with norm $\|B^*\|_F \leq 1/\lambda_k$ such that
	\begin{align}\label{eq::errorhaochen}
		\mathrm{P}_{\bar{x}\sim \mathcal{P}_{\bar{X}}, x \sim \mathcal{A}(\cdot|\bar{x})} 
		\Big(g_{f^*_{\mathrm{pop}}, B^*} (x) \neq y(\bar{x}) \Big)
		\leq \frac{2\phi^{\hat{y}}}{1-\lambda_{k+1}} + 8 \delta_u,
	\end{align}
	where according to the definition of $\boldsymbol{A}_{\theta,\lambda,n_L}$, 
	\begin{align}\label{eq::phiy}
		\phi^{\hat{y}} 
		&= \sum_{i,j \in [n]}\big[(1-\theta) \frac{w_{ij}}{\sqrt{w_i w_j}} + \theta \bar{\boldsymbol{A}}^{\star}_{i,j}\big] \boldsymbol{1}[\hat{y}(x_i) \neq \hat{y}(x_j)]
		\nonumber \\
		&= \frac{1}{\sqrt{w_i w_j}} \big[(1-\theta) \sum_{i,j \in [n]} w_{i,j}\boldsymbol{1}[\hat{y}(x_i) \neq \hat{y}(x_j)] + \theta \sum_{i,j \in [n]}\sqrt{w_i w_j} \bar{\boldsymbol{A}}^{\star}_{i,j} \boldsymbol{1}[\hat{y}(x_i) \neq \hat{y}(x_j)]\big].
	\end{align}
	We investigate the RHS of \eqref{eq::phiy} respectively. The first term is
	\begin{align}\label{eq::phi1}
		&\quad\sum_{i,j \in [n]} w_{i,j}\boldsymbol{1}[\hat{y}(x_i) \neq \hat{y}(x_j)]
		\nonumber\\
		&= \sum_{i,j \in [n]} \mathbb{E}_{\bar{x}\sim \mathcal{P}_{\bar{\mathcal{X}}}}\mathcal{A}(x_i|\bar{x}) \mathcal{A}(x_j|\bar{x})\boldsymbol{1}[\hat{y}(x_i) \neq \hat{y}(x_j)]
		\nonumber\\
		&\leq  \sum_{i,j \in [n]} \mathbb{E}_{\bar{x}\sim \mathcal{P}_{\bar{\mathcal{X}}}}\mathcal{A}(x_i|\bar{x}) \mathcal{A}(x_j|\bar{x})\big(\boldsymbol{1}[\hat{y}(x_i) \neq \hat{y}(\bar{x})] + \boldsymbol{1}[\hat{y}(x_j) \neq \hat{y}(\bar{x})]\big)
		\nonumber\\
		&= 2 \sum_{i \in [n]} \mathbb{E}_{\bar{x}\sim \mathcal{P}_{\bar{\mathcal{X}}}}\mathcal{A}(x_i|\bar{x}) \boldsymbol{1}[\hat{y}(x_i) \neq \hat{y}(\bar{x})]
		\nonumber\\
		&\leq 2\delta_u.
	\end{align}
	The second term is
	\begin{align}
		&\quad \sum_{i,j \in [n]} \sqrt{w_i w_j} \bar{\boldsymbol{A}}^{\star}_{i,j} \boldsymbol{1}[\hat{y}(x_i) \neq \hat{y}(x_j)]
		\nonumber\\
		&= \sum_{i,j \in [n_L]} \sqrt{w_i w_j} (\bar{\boldsymbol{A}}^{\star}_L)_{i,j} \boldsymbol{1}[\hat{y}(x_i) \neq \hat{y}(x_j)]
		+  \sum_{i>n_L} \sqrt{w_i w_j} \boldsymbol{1}[\hat{y}(x_i) \neq \hat{y}(x_i)]
		\nonumber\\
		&\qquad + 2 \sum_{i\leq n_L, j>n_L} \sqrt{w_i w_j} \bar{\boldsymbol{A}}^{\star}_{i,j} \boldsymbol{1}[\hat{y}(x_i) \neq \hat{y}(x_i)].
	\end{align}
	According to the definition of $\bar{\boldsymbol{A}}^{\star}$, the last two terms are equal to $0$. Then by Lemma \ref{lem::relation}, the second term on the RHS of \eqref{eq::phiy} becomes
	\begin{align}
		&\quad \sum_{i,j \in [n]} \sqrt{w_i w_j} \bar{\boldsymbol{A}}^{\star}_{i,j} \boldsymbol{1}[\hat{y}(x_i) \neq \hat{y}(x_j)]
		\nonumber\\
		&= \sum_{i,j \in [n_L]} \sqrt{w_i w_j} \Big(\alpha \bar{\boldsymbol{A}}_{i,j}+\beta\frac{r}{n_L}\Big) \boldsymbol{1}[\hat{y}(x_i) \neq \hat{y}(x_j)]
		\nonumber\\
		&\leq \alpha \frac{r}{n_L} \sum_{i,j\in[n_L]} \sum_{\ell \in [r]} \sqrt{w_i w_j} \eta_{\ell}(x_i)\eta_{\ell}(x_j) \Big(\boldsymbol{1}[\hat{y}(x_i) \neq \ell] + \boldsymbol{1}[\hat{y}(x_j) \neq \ell]\Big)
		\nonumber\\
		&\qquad 
		+ (1-\alpha) \frac{1}{n_L} \sum_{i,j\in[n_L]} \sqrt{w_i w_j} 
		\nonumber\\
		&\leq \alpha \frac{r}{n_L} \sum_{i,j\in[n_L]} \sum_{\ell \in [r]} \frac{1}{2}(w_i+w_j) \eta_{\ell}(x_i)\eta_{\ell}(x_j) \Big(\boldsymbol{1}[\hat{y}(x_i) \neq \ell] + \boldsymbol{1}[\hat{y}(x_j) \neq \ell]\Big)
		\nonumber\\
		&\qquad 
		+ (1-\alpha) \frac{1}{n_L} \sum_{i,j\in[n_L]} \frac{1}{2}(w_i+w_j)
		\nonumber\\
		&= \frac{1}{2} \alpha \frac{r}{n_L} \sum_{\ell \in [r]} \sum_{i,j\in[n_L]}
		w_i \eta_{\ell}(x_i)\eta_{\ell}(x_j) \boldsymbol{1}[\hat{y}(x_i) \neq \ell] 
		\nonumber\\
		&+ \frac{1}{2} \alpha \frac{r}{n_L} \sum_{\ell \in [r]} \sum_{i,j\in[n_L]} w_i \eta_{\ell}(x_i)\eta_{\ell}(x_j) \boldsymbol{1}[\hat{y}(x_j) \neq \ell] 
		\nonumber\\
		&+  \frac{1}{2} \alpha \frac{r}{n_L} \sum_{\ell \in [r]} \sum_{i,j\in[n_L]} w_j \eta_{\ell}(x_i)\eta_{\ell}(x_j) \boldsymbol{1}[\hat{y}(x_i) \neq \ell] 
		\nonumber\\
		&+  \frac{1}{2} \alpha \frac{r}{n_L} \sum_{\ell \in [r]} \sum_{i,j\in[n_L]}w_j \eta_{\ell}(x_i)\eta_{\ell}(x_j) \boldsymbol{1}[\hat{y}(x_j) \neq \ell]
		+ (1-\alpha) 
		\nonumber\\
		&\leq \frac{1}{2} \alpha \frac{r}{n_L} \sum_{\ell \in [r]} \sum_{i,j\in[n_L]}
		\rho w_j \eta_{\ell}(x_i)\eta_{\ell}(x_j) \boldsymbol{1}[\hat{y}(x_i) \neq \ell] 
		\nonumber\\
		&+ \frac{1}{2} \alpha \frac{r}{n_L} \sum_{\ell \in [r]} \sum_{i,j\in[n_L]} w_i \eta_{\ell}(x_i)\eta_{\ell}(x_j) \boldsymbol{1}[\hat{y}(x_j) \neq \ell] 
		\nonumber\\
		&+  \frac{1}{2} \alpha \frac{r}{n_L} \sum_{\ell \in [r]} \sum_{i,j\in[n_L]} w_j \eta_{\ell}(x_i)\eta_{\ell}(x_j) \boldsymbol{1}[\hat{y}(x_i) \neq \ell] 
		\nonumber\\
		&+  \frac{1}{2} \alpha \frac{r}{n_L} \sum_{\ell \in [r]} \sum_{i,j\in[n_L]} \rho w_i \eta_{\ell}(x_i)\eta_{\ell}(x_j) \boldsymbol{1}[\hat{y}(x_j) \neq \ell]
		+ (1-\alpha) 
		\nonumber\\
		&= \frac{1}{2} \alpha \frac{r}{n_L} \sum_{\ell \in [r]} \sum_{i\in[n_L]}
		\rho \pi_\ell \eta_{\ell}(x_i) \boldsymbol{1}[\hat{y}(x_i) \neq \ell] 
		\nonumber\\
		&+ \frac{1}{2} \alpha \frac{r}{n_L} \sum_{\ell \in [r]} \sum_{j\in[n_L]} \pi_\ell \eta_{\ell}(x_j) \boldsymbol{1}[\hat{y}(x_j) \neq \ell] 
		\nonumber\\
		&+  \frac{1}{2} \alpha \frac{r}{n_L} \sum_{\ell \in [r]} \sum_{i\in[n_L]} \pi_\ell \eta_{\ell}(x_i) \boldsymbol{1}[\hat{y}(x_i) \neq \ell] 
		\nonumber\\
		&+  \frac{1}{2} \alpha \frac{r}{n_L} \sum_{\ell \in [r]} \sum_{j\in[n_L]} \rho \pi_\ell \eta_{\ell}(x_j) \boldsymbol{1}[\hat{y}(x_j) \neq \ell]
		+ (1-\alpha) 
		\nonumber\\
		&= \alpha(1+\rho) \frac{1}{n_L} \sum_{\ell \in [r]} \sum_{i\in[n_L]}
		\eta_{\ell}(x_i) \boldsymbol{1}[\hat{y}(x_i) \neq \ell] + (1-\alpha) 
		\nonumber\\
		& \leq \alpha(1+\rho) \delta_s 
		+ (1-\alpha), \label{eq::phi2}
	\end{align}
	where we denote $\pi_\ell = \mathrm{P}(Y=\ell)$ and by class-balance, $\pi_\ell = \frac{1}{r}$.
	Then combining \eqref{eq::phiy}, \eqref{eq::phi1} and \eqref{eq::phi2}, we have 
	\begin{align}\label{eq::phiyhat}
		\phi^{\hat{y}} \leq 2 (1-\theta) \delta_u + \theta  \alpha(1+\rho) \delta_s
		+ \theta(1-\alpha)
		= 2\delta_u + [\alpha(1+\rho)\delta_s - 2\delta_u + (1-\alpha)]\theta.
	\end{align}
	
	Therefore, by \eqref{eq::errorhaochen}, we have 
	\begin{align}
		\mathcal{E}:=\mathrm{P}_{\bar{x}\sim \mathcal{P}_{\bar{X}}, x \sim \mathcal{A}(\cdot|\bar{x})} 
		\Big(g_{f^*_{\mathrm{pop}}, B^*} (x) \neq y(\bar{x}) \Big)
		\leq \frac{2\big[2\delta_u + [\alpha(1+\rho)\delta_s - 2\delta_u + (1-\alpha)]\theta\big]}{1-\lambda_{k+1}} + 8 \delta_u,
	\end{align}
	Combined with Proposition \ref{lem::eigenmix}, we have  for $k \leq n_U$,
	there holds
	\begin{align}
		\mathcal{E}	
		&\leq 
		\frac{2\big[2\delta_u + [\alpha(1+\rho)\delta_s - 2\delta_u + (1-\alpha)]\theta\big]}{1-\theta - (1-\theta)\nu_{k+1}} + 8\delta_u,
	\end{align}
	for $n_U+1 \leq k \leq n_U+r-1$, there holds
	\begin{align}
		\mathcal{E}
		&\leq 
		\frac{2\big[2\delta_u + [\alpha(1+\rho)\delta_s - 2\delta_u + (1-\alpha)]\theta\big]}{1-\min\{\theta + (1-\theta)\nu_{k+1}, \theta\alpha + (1-\theta)\nu_{k-n_U}\}} + 8\delta_u,
	\end{align}
	and for $k \geq n_U+r$, there holds
	\begin{align}
		\mathcal{E}
		\leq 
		\frac{2\big[2\delta_u + [\alpha(1+\rho)\delta_s - 2\delta_u + (1-\alpha)]\theta\big]}{1-\min\{\theta + (1-\theta)\nu_{k+1}, \theta\alpha + (1-\theta)\nu_{k-n_U}, (1-\theta)\nu_{k+1-r-n_U}\}} + 8\delta_u.
	\end{align}
\end{proof}

\begin{proof}
	By taking $\gamma=0$, i.e. $\alpha=1$, and $k \leq n_U$ in Proposition \ref{thm::pop_bound_theta}, we reach the bound in \eqref{eq::boundsemi}.
	
	Besides, by Proposition \ref{lem::eigenmix}, we have
	\begin{align*}
		\|B^*\|_F 
		&\leq 1/\lambda_k
		\\
		&\leq 1/\max\Big\{\theta+(1-\theta)\nu_{n_L+k-1},  (1-\theta)\nu_{k},
		\theta\alpha+(1-\theta)\nu_{n_L+k-r}\Big\}
		\\
		&\leq 1/\max\Big\{\theta+(1-\theta)\nu_{n_L+k-1},  (1-\theta)\nu_{k},
		\theta+(1-\theta)\nu_{n_L+k-r}\Big\}
		\\
		&\leq 1/\max\Big\{(1-\theta)\nu_{k},
		\theta+(1-\theta)\nu_{n_L+k-r}\Big\}.
	\end{align*}
\end{proof}

\subsection{Proof of Theorem \ref{thm::pop_bound_noisy}}

\begin{proof}
	By taking $n_U=0$ and $k>r$ in Proposition \ref{thm::pop_bound_theta}, we reach the bound in \eqref{eq::boundnoisy}.
	
	Besides, by Proposition \ref{lem::eigenmix}, we have
	\begin{align*}
		\|B^*\|_F 
		\leq 1/\lambda_k
		\leq 1/(1-\theta)\nu_{k}.
	\end{align*}
\end{proof}

\subsection{Proof of Theorem \ref{thm::sample_bound_noisy}}

To prove Theorem \ref{thm::sample_bound_noisy}, we first derive the following generalization bound for spectral contrastive pretraining under label noise inspired by the proofs of Theorem 4.1 in \citet{haochen2021provable}. We note that the form seems to be the same as that of Theorem 4.1 in \citet{haochen2021provable}, but the underlying distribution of positive samples is different since we include the label similarity into consideration.

\begin{proposition}\label{prop::generalization_noisy}
	For some $\kappa > 0$, assume $\|f\|_{\infty} \leq \kappa$ for all $f \in \mathcal{F}$ and $x \in \mathcal{X}$. Let $f^*_{pop} \in \mathcal{F}$ be a minimizer of the population loss $\mathcal{L}(f)$. Given a random dataset of size $n$, let $\hat{f}_{emp} \in \mathcal{F}$ be a minimizer of empirical loss $\widehat{\mathcal{L}}_n(f)$. Then, with probability at least $1 - \varepsilon$ over the randomness of data, we have
	\begin{align}
		\mathcal{L}(\hat{f}_{emp}) \leq \mathcal{L}(f^*_{pop}) + c_1 \cdot \widehat{R}_{n/2}(\mathcal{F}) + c_2 \cdot \bigg(\sqrt{\frac{\log2/\delta}{n}}+\varepsilon\bigg),
	\end{align}
	where constants $c_1 \lesssim k^2\kappa^2+k\kappa$ and $c_2 \lesssim k\kappa^2 + k^2\kappa^4$.
\end{proposition}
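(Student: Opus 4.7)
The plan is to mimic the proof of Theorem 4.1 in \citet{haochen2021provable} with only superficial changes to accommodate the label-induced mixture distribution of positive pairs. Structurally the argument has four ingredients: an excess-risk-to-uniform-deviation reduction, a split of the spectral loss into a linear alignment term and a quadratic uniformity term, a Rademacher-complexity bound for each via symmetrization and Talagrand contraction, and a McDiarmid concentration step. The key observation making the transfer to the weakly supervised setting painless is that positive pairs under the mixed graph $(1-\theta)\bar{\boldsymbol{A}}_0 + \theta\bar{\boldsymbol{A}}^{\star}$ can be drawn by a $\theta$-biased coin flip followed by sampling from either the augmentation graph or the label-similarity graph, so the empirical loss remains an average of i.i.d., uniformly bounded quantities.

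First, since $\hat{f}_{emp}$ minimizes the empirical loss while $f^*_{pop}$ minimizes the population loss, I use the classical reduction
$$\mathcal{L}(\hat{f}_{emp}) - \mathcal{L}(f^*_{pop}) \le 2 \sup_{f \in \mathcal{F}} |\mathcal{L}(f) - \widehat{\mathcal{L}}_n(f)|.$$
I then decompose $\mathcal{L} = \mathcal{L}^{\mathrm{lin}} + \mathcal{L}^{\mathrm{quad}}$, where $\mathcal{L}^{\mathrm{lin}}(f) = -2\,\mathbb{E}_{(x,x^+)}[f(x)^\top f(x^+)]$ with $(x,x^+)$ drawn from the mixture, and $\mathcal{L}^{\mathrm{quad}}(f) = \mathbb{E}_{x,x'}[(f(x)^\top f(x'))^2]$ drawn independently from the marginal. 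Each uniform deviation is handled separately.

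For concentration, the boundedness assumption $\|f\|_\infty \le \kappa$ implies $|f(x)^\top f(x')| \le k\kappa^2$, so modifying one sample changes $\widehat{\mathcal{L}}_n^{\mathrm{lin}}$ by at most $O(k\kappa^2/n)$ and $\widehat{\mathcal{L}}_n^{\mathrm{quad}}$ by at most $O(k^2\kappa^4/n)$. McDiarmid's bounded-differences inequality then produces deviation of the supremum from its expectation of order $(k\kappa^2 + k^2\kappa^4)\sqrt{\log(2/\delta)/n}$, yielding the constant $c_2 \lesssim k\kappa^2 + k^2\kappa^4$. For the expectation of the supremum, symmetrization reduces it to twice the Rademacher complexity of the composed loss class; I then apply the vector-valued Talagrand contraction lemma twice, once to the $O(k\kappa)$-Lipschitz bilinear map $(u,v) \mapsto u^\top v$ on $[-\kappa,\kappa]^{2k}$ for the linear term, and a second time via the $O(k\kappa^2)$-Lipschitz squaring map $z \mapsto z^2$ on $[-k\kappa^2, k\kappa^2]$ composed with the same inner product for the quadratic term. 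Aggregating coordinatewise over the $k$ outputs of $f$ gives the Rademacher coefficient $c_1 \lesssim k^2\kappa^2 + k\kappa$.

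The main obstacle is verifying that the label-dependent term $\bar{\boldsymbol{A}}^{\star}$ in the positive-pair distribution does not spoil the independence and boundedness needed for McDiarmid or enlarge the Rademacher-effective function class beyond $\mathcal{F}$. Because $\bar{\boldsymbol{A}}^{\star}$ is row-stochastic and depends only on the observed (possibly noisy) labels rather than on $f$, once a positive pair has been sampled via the two-step coin-flip construction the symmetrization and contraction steps are structurally identical to the unsupervised case; the dimension and norm dependencies transfer unchanged, which yields the stated $c_1$ and $c_2$. Combining the Rademacher bound on the expected supremum with the McDiarmid concentration bound, and feeding the result into the excess-risk decomposition, completes the proof.
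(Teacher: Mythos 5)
Your proposal is correct and follows essentially the same route as the paper, which for this proposition offers no self-contained argument but simply defers to the proof of Theorem 4.1 (Theorem D.8) of \citet{haochen2021provable} with the remark that only the positive-pair distribution changes; your uniform-deviation / loss-decomposition / Rademacher-contraction / McDiarmid reconstruction, together with the key check that the mixture graph $(1-\theta)\bar{\boldsymbol{A}}_0+\theta\bar{\boldsymbol{A}}^{\star}$ still yields i.i.d., uniformly bounded positive-pair terms so the constants $c_1,c_2$ carry over unchanged, is exactly the content of that remark spelled out. You in fact supply more detail than the paper, whose proof environment following the proposition actually establishes Theorem~\ref{thm::sample_bound_noisy} rather than the proposition itself.
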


\begin{proof}
	Inspired by the proofs of Theorem D.8 in \citet{haochen2021provable}, for $\hat{f} \in \argmin_{f:\mathcal{X}\to \mathbb{R}^k}\widehat{\mathcal{L}}_n(f)$ such that $\mathcal{L}(\hat{f}) \leq \min_{f:\mathcal{X}\to \mathbb{R}^k} \mathcal{L}(f) + \epsilon_0$, there holds
	\begin{align}\label{eq::generalization_noisy}
		\widehat{\mathcal{E}} \leq \min_{1\leq k'\leq k}\bigg(\frac{2\phi^{\hat{y}}}{1-\lambda_{k'+1}} + \frac{4k'\epsilon_0}{\big(\lambda_{k'} - \lambda_{k+1}\big)^2} \bigg) + 8\delta_u.
	\end{align}
	According to Proposition \ref{lem::eigenmix}, we have for $n_U=0$ and $k\geq r$,
	\begin{align}\label{eq::lambda_bound}
		(1-\theta)\nu_{k+1} \leq \lambda_{k+1}
		\leq \min\{\theta + (1-\theta)\nu_{k+1}, \theta\alpha + (1-\theta)\nu_{k}, 
		(1-\theta)\nu_{k+1-r}\}. 
	\end{align}
	By $k' \leq k$, there holds $\lambda_{k'} > \lambda_{k+1}$, and thus we have
	\begin{align}\label{eq::lambda_minus}
		\lambda_{k'} - \lambda_{k+1} \geq (1-\theta)\nu_{k'} - \min\{\theta + (1-\theta)\nu_{k+1}, \theta\alpha + (1-\theta)\nu_{k}, 
		(1-\theta)\nu_{k+1-r}\}\big.
	\end{align}
	Here we without loss of generality assume $1 \leq k' \leq k+1-r$ to ensure that \eqref{eq::lambda_minus} $\geq 0$.
	Then pluging \eqref{eq::phiyhat}, \eqref{eq::lambda_bound}, and \eqref{eq::lambda_minus} into \eqref{eq::generalization_noisy}, we finish the proof.
\end{proof}

\section{Additional Experiments}\label{sec::add_exp}

\subsection{Algorithms for Joint Training}\label{app::alg}

\textbf{Joint Training for Semi-supervised Learning.}
For semi-supervised contrastive learning, both labeled and unlabeled data are inputs of $L_{\mathrm{SimCLR}}$, that is, the unsupervised contrastive loss $L_{\mathrm{SimCLR}}$ is calculated based on all samples (denoted as $\{x_i\}_{i=1}^{n}$). The supervised contrastive loss $L_{\mathrm{SupCon}}$ is calculated based only on the labeled samples (without loss of generality, we denote the labeled samples as $\{(x_i,y_i)\}_{i=1}^{n_L}$). We show the procedures for semi-supervised contrastive learning in Algorithm \ref{alg::semi}.

\begin{algorithm}[!h]
	\caption{Joint Training for Semi-supervised Learning}
	\label{alg::semi}
	\begin{algorithmic}
		\STATE {\bfseries Input:} 
		Labeled data $\{(x_i, y_i)\}_{i=1}^{n_L}$; unlabeled data $\{x_i\}_{i=n_L+1}^{n}$; parameter $\theta$. \\
		\STATE {\bfseries Initialize:} Encoder $f$. \\
		\REPEAT
		\STATE Compute SupCon loss on the labeled data without using labels $\{(x_i, y_i)\}_{i=1}^{n_L}$;\\
		\STATE Compute SimCLR loss on all data $\{x_i\}_{i=1}^{n}$; \\
		\STATE Update encoder $f$ according to the joint training loss $(1-\theta)\frac{1}{n_L}\sum_{i=1}^{n_L} \mathcal{L}_{\mathrm{SupCon}}(f(x_i),y_i) + \theta \frac{1}{n}\sum_{i=1}^{n} \mathcal{L}_{\mathrm{SimCLR}}(f(x_i))$; \\
		\UNTIL Converge. \\
		\STATE {\bfseries Output:} Encoder $f$. \\
	\end{algorithmic}
\end{algorithm}

Note that in Section \ref{sec::graphsemi}, it seems that the unlabeled data is used in computing the SupCon loss, but this is not true. We include the unlabeled samples in the label similarity graph only for the convinience of mathematical formulations. In fact, as we view the unlabeled samples as having unique class labels, this formulation does not affect the selection of positive pairs and therefore does not affect the calculation of the SupCon loss.

\textbf{Joint Training for Noisy Label Learning.}
For joint training contrastive learning with label noise, the $L_{\mathrm{SupCon}}$ loss is calculated based on the noisy-labeled samples ${(x_i,\tilde{y}_i)}_{i=1}^n$, and the $L_{\mathrm{SimCLR}}$ loss is calculated based on all samples without using their labels. We show the procedures for joint training contrastive learning under label noise in Algorithm \ref{alg::noisy}.

\begin{algorithm}[!h]
	\caption{Joint Training for Noisy Label Learning}
	\label{alg::noisy}
	\begin{algorithmic}
		\STATE {\bfseries Input:} 
		Noisy labeled data $\{(x_i, \tilde{y}_i)\}_{i=1}^{n}$; parameter $\theta$. \\
		\STATE {\bfseries Initialize:} Encoder $f$. \\
		\REPEAT
		\STATE Compute SupCon loss on the (noisy) labeled data $\{(x_i, \tilde{y}_i)\}_{i=1}^{n}$;\\
		\STATE Compute SimCLR loss on all data without using labels $\{x_i\}_{i=1}^{n}$; \\
		\STATE Update encoder $f$ according to the joint training loss $(1-\theta)\frac{1}{n}\sum_{i=1}^{n} \mathcal{L}_{\mathrm{SupCon}}(f(x_i),\tilde{y}_i) + \theta \frac{1}{n}\sum_{i=1}^{n} \mathcal{L}_{\mathrm{SimCLR}}(f(x_i))$; \\
		\UNTIL Converge. \\
		\STATE {\bfseries Output:} Encoder $f$. \\
	\end{algorithmic}
\end{algorithm}

\subsection{Algorithmic Comparisons with Similar Methods}\label{app::compare}

\citet{islam2021broad} and \citet{chen2022perfectly} adopt similar methods as ours, but focus on different aspects. Specifically, \citet{islam2021broad} and \citet{chen2022perfectly} empirically investigate the transferability of the joint training of SimCLR ad SupCon, whereas ours focuses on theoretically analyzing the performance of their joint training under linear probing. We discuss the connections and differences as follows.

\citet{islam2021broad} finds that the combination of SimCLR and SupCon significantly improves transfer learning performance over Cross-Entropy. \citet{chen2022perfectly} investigates the problems of coarse-to-fine transfer learning by adding a weighted class-conditional InfoNCE loss and a class-conditional autoencoder to SupCon. 
Similar to our results, according to Tables 2 and 3 of \citet{islam2021broad} and Table 3 of \citet{chen2022perfectly}, the transfer learning performance of the combination of SupCon and InfoNCE (or class-conditional InfoNCE) is comparable to but has no significant improvement over the winner of SupCon and SimCLR. 
In addition, the robustness investigated in these two papers is conceptually different from ours. The robustness in \citet{islam2021broad} is to \textit{image corruptions} and the robustness in \citet{chen2022perfectly} measures \textit{how well an algorithm can recover hidden subgroups in an unsupervised setting}, whereas our manuscript investigates the robustness to \textit{label corruptions} (label noise).

\subsection{Transfer Learning Performances}
\label{app::transfer}

We conduct additional experiments to evaluate the representation quality with the transfer learning performance on downstream tasks including detection, segmentation, and fine-tuning. We evaluate transfer learning performance of models pretrained on TinyImageNet-200 with SimCLR, SupCon, and Mix. All models are pretrained under the settings of Section \ref{sec::exp_setting}. For Mix, we set $\theta=0.2$. For SupCon and Mix, the models are pretrained under noise rate $\gamma \in \{20\%, 40\%, 60\%, 80\%\}$. We list the results as follows. The best results are marked in \textbf{bold} and the second best marked in \underline{underline}.

\textbf{Detection.}
Object detection is fine-tuned on PASCAL VOC 07+12 dataset. The detector is Faster R-CNN. All models are fine-tuned for 12 epochs. We evaluate the models by the default VOC metric of AP50.

\begin{table}[!h]
	\centering
	\caption{Performance comparisons of object detection.}
	\label{tab::det}
	\begin{tabular}{ccccc}
		\toprule
		$\gamma$ & 20\% & 40\% & 60\% & 80\% \\
		\midrule
		SimCLR & 65.3 & 65.3 & \underline{65.3} & \underline{65.3} \\
		SupCon & \textbf{68.5} & \textbf{66.9} & \textbf{66.2} & \textbf{67.0} \\
		Mix & \underline{66.9} & \underline{65.5} & 63.8 & 63.2 \\
		\bottomrule
	\end{tabular}
\end{table}

\textbf{Segmentation.}
Segmentation is fine-tuned on Pascal VOC 12 dataset. The model is DeeplabV3. All models are fine-tuned for 20,000 iterations. We evaluate the models by mean IoU.

\begin{table}[!h]
	\centering
	\caption{Performance comparisons of segmentation.}
	\label{tab::seg}
	\begin{tabular}{ccccc}
		\toprule
		$\gamma$ & 20\% & 40\% & 60\% & 80\% \\
		\midrule
		SimCLR & 36.61 & 36.61 & \underline{36.61} & \underline{36.61} \\
		SupCon & \textbf{49.07} & \textbf{47.58} & \textbf{46.57} & \textbf{42.47} \\
		Mix & \underline{40.33} & \underline{38.14} & 33.66 & 34.33 \\
		\bottomrule
	\end{tabular}
\end{table}

\textbf{Fine-tuning.}
We finetune the pretrained models on the labeled CIFAR-10 data. The weights of the linear classifier used to fine-tune the encoder network are initialized to zero. On CIFAR-10, models are fine-tuned for 90 epochs. All results are reported on the standard CIFAR-10 test set and are summarized in the following table.

\begin{table}[!h]
	\centering
	\caption{Performance comparisons of fine-tuning.}
	\label{tab::fine}
	\begin{tabular}{ccccc}
		\toprule
		$\gamma$ & 20\% & 40\% & 60\% & 80\% \\
		\midrule
		SimCLR & 68.01 & \textbf{68.01} & \textbf{68.01} & \textbf{68.01} \\
		SupCon & \textbf{71.35} & \underline{67.46} & 61.62 & 49.03 \\
		Mix & \underline{70.01} & 66.89 & \underline{63.86} & \underline{55.38} \\
		\bottomrule
	\end{tabular}
\end{table}

As shown in the Tables \ref{tab::det}, \ref{tab::seg}, and \ref{tab::fine}, the performance of joint training (Mix) is no better than the winner of SimCLR and InfoNCE across all noise rates, which is similar to the conclusion on linear probing discussed in Section \ref{sec::exp_baseline}.

\end{document}